\theoremstyle{definition}
\theoremstyle{plain}
\newtheorem{lemma}{Lemma}[section]
\newtheorem{theorem}{Theorem}[section]
\newtheorem{corollary}{Corollary}[section]
\newtheorem{assumption}{Assumption}[section]
\theoremstyle{remark}
\newtheorem{remark}{Remark}[section]
\setlist[enumerate]{itemsep=0.5ex, topsep=0.5ex, parsep=0.5ex, partopsep=0ex}
\title{Quantum Topological Graph Neural Networks for Detecting Complex Fraud Patterns
}
\author{
  Mohammad Doost\textsuperscript{*} \\
  Department of Electrical Engineering\\
  Sharif University of Technology \\
  Tehran, Iran \\
  \texttt{mohammad.doost@alum.sharif.edu} \\
   \And
  Mohammad Manthouri\\
  Department of Electrical and Electronic Engineering \\
  Shahed University\\
  Tehran, Iran \\
  \texttt{mmanthouri@shahed.ac.ir} \\
}
\begin{document}
\maketitle
\begin{abstract}
We propose a novel QTGNN framework for detecting fraudulent transactions in large-scale financial networks. By integrating quantum embedding, variational graph convolutions, and topological data analysis, QTGNN captures complex transaction dynamics and structural anomalies indicative of fraud. The methodology includes quantum data embedding with entanglement enhancement, variational quantum graph convolutions with non-linear dynamics, extraction of higher-order topological invariants, hybrid quantum-classical anomaly learning with adaptive optimization, and interpretable decision-making via topological attribution. Rigorous convergence guarantees ensure stable training on noisy intermediate-scale quantum (NISQ) devices, while stability of topological signatures provides robust fraud detection. Optimized for NISQ hardware with circuit simplifications and graph sampling, the framework scales to large transaction networks. Simulations on financial datasets, such as PaySim and Elliptic, benchmark QTGNN against classical and quantum baselines, using metrics like ROC-AUC, precision, and false positive rate. An ablation study evaluates the contributions of quantum embeddings, topological features, non-linear channels, and hybrid learning. QTGNN offers a theoretically sound, interpretable, and practical solution for financial fraud detection, bridging quantum machine learning, graph theory, and topological analysis.
\end{abstract}

\keywords{Quantum Graph Neural Networks \and Topological Data Analysis \and Persistent Homology \and Convergence Guarantees \and Financial Fraud Detection}

\section{Introduction}
\label{sec:Intro}

The rapid expansion of digital financial ecosystems has created unprecedented opportunities for innovation in banking, payments, and decentralized finance, fundamentally transforming how individuals and institutions exchange value \cite{ali2022financial}. At the same time, this transformation has generated increasingly complex, high-dimensional, and large-scale transaction networks, in which fraudulent behaviors can propagate with remarkable speed and sophistication \cite{awoyemi2017credit}. Activities such as money laundering, identity theft, coordinated fraud rings, and synthetic account generation pose persistent challenges for regulators and financial institutions, leading to substantial economic losses and reputational damage. Detecting these anomalies is further complicated by the sheer volume, velocity, and diversity of transaction data, which include temporal, relational, and multimodal features. Theoretical advances in machine learning, quantum computation, and topological data analysis now offer promising avenues for building systems that are not only more accurate but also robust, interpretable, and scalable in high-stakes financial decision-making environments \cite{bin2022review}. Consequently, the convergence of quantum machine learning, graph-based representations, and topological methods is emerging as a fertile research frontier with interdisciplinary implications spanning computer science, finance, regulatory technology, and cybersecurity \cite{varmedja2019credit}.

Despite considerable progress in graph neural networks (GNNs) for fraud detection, existing approaches exhibit notable limitations when applied to real-world financial transaction networks \cite{cheng2025graph}. Traditional GNNs predominantly rely on local message-passing mechanisms, which aggregate information from immediate neighbors but often fail to capture higher-order, long-range dependencies and subtle non-linear correlations that are critical for identifying coordinated fraudulent behaviors \cite{motie2024financial}. These models also typically demand large quantities of labeled data for effective training, a requirement that is rarely satisfied in fraud detection due to highly imbalanced classes and constantly evolving adversarial strategies \cite{khaniki2025class}. In addition, most deep learning models remain opaque, providing limited interpretability, which poses regulatory and operational challenges, as institutions and auditors require clear explanations for decisions impacting customers and stakeholders \cite{alarfaj2024enhancing}. The combination of these technical and practical shortcomings creates an urgent need for models that balance expressive representational power with theoretical rigor, interpretability, and computational scalability, enabling timely and reliable detection of complex, distributed fraudulent patterns across dynamic financial networks\cite{dong2024dynamic}.

Recent years have witnessed significant exploration of advanced computational paradigms for anomaly detection beyond classical deep learning frameworks \cite{bhowmik2024quantum}. Quantum machine learning, for instance, has emerged as a promising approach capable of representing and processing high-dimensional data in ways unattainable by conventional methods, with potential exponential speedups in embedding and optimization tasks \cite{hdaib2024quantum}. Variational quantum circuits have demonstrated applicability in classification, clustering, and pattern recognition problems, while hybrid quantum–classical architectures offer practical solutions tailored for resource-limited, NISQ devices \cite{umeda2019topological}. Parallelly, topological data analysis (TDA) has gained attention for its ability to extract invariant features that characterize the global structure of data, such as persistence homology, which remain stable under noise or minor perturbations \cite{monkam2024topological}. Studies integrating graph-based learning with topological descriptors have shown improved detection of community structures, motifs, and anomalous patterns in diverse networks, including social, transportation, and power-grid systems \cite{eroglu2023topological}. However, most existing approaches address quantum learning or topological descriptors in isolation, neglecting the potential synergistic benefits of combining these methodologies to exploit both the expressive power of quantum embeddings and the robustness of topological invariants in networked data \cite{li2025hqrnn}.

This gap underscores the opportunity and necessity for a unified framework that synergistically integrates quantum computing, topological invariants, and graph neural architectures for anomaly detection in financial transaction networks. While existing quantum graph neural network models offer intriguing theoretical foundations, they often lack rigorous mechanisms for interpretability and robustness when confronted with real-world noise and adversarial manipulations \cite{barberi2025topological}. Similarly, although topological descriptors have been employed to enhance GNN performance, their integration with quantum learning has been minimally explored, particularly for large-scale financial applications where both precision and explainability are critical. Addressing these challenges is not merely a technical advancement; it carries regulatory and societal significance, as financial institutions require models that can reliably operate under adversarial conditions, adhere to computational constraints, and provide audit-ready explanations for their predictions. Furthermore, the combined use of quantum embeddings and topological analysis has the potential to uncover subtle, higher-order relational patterns that classical methods may overlook, enabling earlier and more accurate detection of complex fraud schemes. By establishing a framework that is both theoretically rigorous and practically deployable, this research aims to set a new benchmark for intelligent, transparent, and scalable anomaly detection in the financial domain \cite{innan2024financial}.

In this paper, we propose an advanced QTGNN for anomaly detection in financial transaction networks. Our framework operationalizes a five-stage methodology integrating quantum state embedding, variational graph convolutions, topological invariant extraction, hybrid anomaly learning, and interpretable decision-making. The model is designed to balance theoretical rigor with practical feasibility by optimizing for NISQ devices through circuit simplification and graph sampling. The key contributions of this work are summarized as follows:

\begin{itemize}
    \item \textbf{Leveraging quantum embeddings for richer relational representation:} Quantum embeddings encode complex, high-dimensional relational information that classical methods often struggle to capture. They exploit quantum superposition and entanglement to represent multiple transaction pathways and interdependencies simultaneously, enabling detection of subtle patterns and correlations distributed across the network, which are difficult for classical GNNs relying on local message-passing. Furthermore, quantum embeddings can reduce the dimensionality of relational data while preserving critical structural information, enhancing both computational efficiency and expressiveness.

    \item \textbf{Incorporating topological invariants for robustness and interpretability:} Topological descriptors, such as persistent homology, capture global structural features that remain invariant to noise or minor perturbations. By integrating these invariants with quantum embeddings, the learned representations become sensitive to meaningful patterns while robust against spurious connections. This approach enhances interpretability by providing insights into which network structures or higher-order motifs contribute to anomaly detection decisions, supporting transparency and regulatory compliance in financial systems.

    \item \textbf{Ensuring scalability and practicality on NISQ devices:} Although quantum models offer theoretical advantages, practical deployment is constrained by hardware limitations such as qubit decoherence, gate errors, and limited circuit depth. Our framework addresses these challenges through NISQ-aware design, employing circuit optimization, error mitigation, and hybrid quantum-classical computation strategies. This ensures that quantum-enhanced graph learning is feasible, scalable, and deployable in real-world financial transaction networks.
\end{itemize}

The remainder of this paper is organized as follows: Section ~\ref{sec:Related_Work} reviews related work in quantum learning, GNNs, and topological methods; Section ~\ref{sec:Methodology} details the proposed methodology; Section ~\ref{sec:ExperimentalValidation} presents experimental design and results; and Section ~\ref{sec:Conclusion} concludes with future research directions.

\section{Related Work}
\label{sec:Related_Work}

\subsection{Foundational Work in Graph-Based Anomaly Detection}

Early research on fraud and anomaly detection in transaction systems relied on rule-based methods, statistical models, and supervised learning algorithms \cite{akoglu2015graph}. These approaches typically depended on hand-crafted features such as transaction frequency, velocity, or amount thresholds. While effective in structured and controlled environments, they failed to capture the dynamic and relational nature of financial systems. The introduction of graph-based learning marked a paradigm shift, as it enabled modeling entities and their interactions as networks \cite{eberle2007anomaly}. Classical graph-theoretic methods, such as community detection, spectral clustering, and random walk-based approaches, provided new perspectives for identifying irregularities \cite{xiong2022analysis}. However, their reliance on shallow descriptors limited scalability and adaptability, particularly when adversaries actively disguised fraudulent patterns. These foundational studies nevertheless established the importance of relational modeling and set the stage for the integration of GNNs into fraud detection pipelines \cite{pourhabibi2020fraud}. \cite{irani2025time} comprehensively reviews time series embedding techniques for classification tasks, categorizing methods into distance-based, transformation-based, and deep learning approaches while evaluating their strengths, limitations, and suitability across diverse domains to guide effective feature representation in time series analysis. \cite{nikzat2025artificial} explores how artificial intelligence drives innovation and competitive advantage across business functions, highlighting key AI applications in marketing, operations, finance, and strategy while emphasizing the need for ethical implementation and workforce adaptation to fully realize AI’s transformative potential. \cite{nikzat2025review} reviews the transformative role of artificial intelligence in reshaping business and management, arguing that AI-driven innovation, data-driven decision-making, and process automation are key sources of strategic competitive advantage while stressing the importance of ethical governance and organizational readiness for successful AI adoption.

The shift to graph-based methods was motivated by their ability to capture complex dependencies within financial systems, such as interactions between accounts, merchants, and payment flows. Community detection algorithms, such as modularity-based approaches, were used to identify clusters of entities that might indicate coordinated fraudulent activities, such as money laundering or account takeovers \cite{coppolino2015use}. Spectral clustering, leveraging the eigenvalues of graph Laplacian matrices, partitioned transaction networks to detect anomalies as outliers in the resulting clusters \cite{larik2011clustering}. Random walk-based methods, inspired by algorithms like PageRank, ranked nodes based on their connectivity patterns, flagging those with anomalous influence or connectivity as potential fraud indicators \cite{vilella2025weirdnodes}. Despite these innovations, early graph-based methods faced significant limitations. Their static representations struggled to model temporal dynamics, such as sudden changes in transaction behavior typical of fraud \cite{zheng2018transaction}. Efforts to incorporate temporal information, such as time-weighted graphs or sliding window techniques, improved detection but introduced computational complexity and sensitivity to noise \cite{kallush2014quantum}. Furthermore, these methods often required extensive feature engineering, relying on domain expertise to define effective graph properties, which limited their adaptability to evolving fraud tactics. These challenges underscored the need for models that could learn relational and temporal patterns directly from data, paving the way for GNN-based approaches. \cite{mashhadi2025interpretable} builds an interpretable Gradient Boosting-SHAP framework that accurately predicts startup funding, patenting, and exits, identifying founder experience, past success, sector, and location as the key drivers of entrepreneurial outcomes. \cite{heravi2025vehicle} proposes a lightweight deep learning system using smartphone inertial sensors (accelerometers and gyroscopes) to reliably detect vehicle intrusions into highway work zones in real time, achieving over 95\% accuracy with low computational cost and enabling proactive safety alerts for construction workers. \cite{mohammadagha2025hybrid} develops a hybrid machine learning meta-model that combines XGBoost, Random Forest, and LightGBM predictions via stacking to accurately assess the structural condition of urban underground pipes, outperforming individual models and traditional methods in predicting deterioration grades using inspection data.

\subsection{Advances in Graph Neural Networks for Anomaly Detection}

GNNs have become the state-of-the-art for analyzing transaction networks, offering the ability to learn hierarchical representations of entities and relationships through message passing and aggregation. Methods such as Graph Convolutional Networks (GCNs), Graph Attention Networks (GATs) \cite{yilmaz2025novel}, and Graph Autoencoders have been applied to fraud detection with promising results \cite{pourhabibi2020fraud}. These models excel at capturing local neighborhood dependencies and differentiating between normal and abnormal transaction flows. Recent works have extended GNNs to dynamic graphs, enabling detection in temporally evolving networks such as blockchain transactions and real-time payment systems \cite{sha2025detecting}. For example, temporal GNNs incorporate time-stamped edges and node features to model evolving transaction patterns, improving detection of anomalies in high-velocity payment systems. Despite these advances, challenges remain. Most GNN-based models struggle with imbalanced datasets, where fraudulent transactions represent a small minority. Additionally, they often lack interpretability, limiting their adoption in regulatory contexts where model transparency is essential. The reliance on large volumes of labeled data further restricts their applicability in zero-day or evolving fraud scenarios \cite{asiri2025graph}.

Recent efforts have sought to address these limitations through innovative GNN architectures and training strategies. For instance, self-supervised learning approaches, such as contrastive learning on graphs, have been proposed to reduce the dependency on labeled data by learning robust node representations from unlabeled transaction networks \cite{wu2021learning}. These methods leverage structural and temporal patterns to distinguish normal from anomalous behaviors without requiring extensive labeled datasets. Additionally, attention-based mechanisms in GATs have been extended to incorporate edge weights and temporal dynamics, enabling more nuanced modeling of transaction flows \cite{vrahatis2024graph}. However, the computational cost of these advanced GNN models remains a barrier, particularly for real-time applications in large-scale financial networks. Moreover, interpretability remains a critical challenge, as GNNs often produce complex embeddings that are difficult to explain to stakeholders in regulatory or financial domains. These gaps highlight the need for frameworks that combine the expressive power of GNNs with mechanisms for interpretability and scalability. This survey \cite{birashk2025federated} provides a comprehensive overview of Federated Continual Learning (FCL) concepts, taxonomies, and challenges, with concise reviews of algorithms for task-incremental and class-incremental settings, distinguishing between identical and non-identical task sequences across clients to guide future distributed adaptive learning research. \cite{mojtahedi2025max} provides the first empirical evidence of the “MAX effect” — the anomalous negative relation between stocks with extreme positive daily returns and future returns — in the Swedish stock market, showing that this lottery-like stock premium is largely explained by a surge in investor sentiment during high-sentiment periods. \cite{mashhadi2025return} presents the persistence of several well-known return anomalies (momentum, value, size, profitability, and investment) in the Iranian stock market despite severe short-selling constraints and limited arbitrage, suggesting that these anomalies are unlikely to be purely driven by mispricing and may instead reflect rational risk-based explanations or behavioral biases that do not require short-selling to manifest.

\subsection{Quantum and Topological Approaches in Learning Systems}

Parallel to GNN advances, quantum machine learning (QML) has emerged as a promising paradigm to address scalability and computational bottlenecks in high-dimensional learning tasks. Variational quantum circuits and hybrid quantum–classical frameworks have been applied to classification, clustering, and optimization, demonstrating potential speedups and richer representational capacity \cite{liao2024graph}. Quantum graph-based models have also been proposed, leveraging entanglement and superposition to encode relational structures \cite{verdon2019quantum}. These quantum GNNs (QGNNs) exploit quantum properties to capture complex dependencies in graph data, offering theoretical advantages over classical GNNs in terms of representational power. However, most existing QGNNs remain limited to small-scale synthetic datasets due to hardware constraints in NISQ devices and lack robust interpretability mechanisms \cite{wierichs2022general}. Recent work has explored hybrid quantum–classical pipelines, where quantum circuits are used to preprocess graph data before feeding into classical GNNs, but practical applications in fraud detection remain scarce \cite{chen2024scn_gnn}.

In another direction, topological data analysis (TDA) has been increasingly recognized for its ability to extract invariant features of complex datasets. Persistence homology, in particular, provides descriptors that capture higher-order connectivity patterns and are resilient to noise, making them attractive for network anomaly detection \cite{islambekov2024topological}. TDA-enhanced GNNs have shown improved performance in capturing global structural patterns, especially in domains such as bioinformatics, social networks, and power grids \cite{gao2025temporal}. For example, persistent homology has been used to identify robust topological features, such as cycles or voids, in transaction networks, enabling the detection of anomalies that deviate from expected structural patterns. However, existing approaches typically treat topology as an auxiliary feature rather than an integral component of the learning process, limiting their ability to fully exploit topological insights. Moreover, the integration of topology with quantum computation has been minimally explored, leaving untapped opportunities for capturing both global invariants and quantum-enhanced embeddings within a unified framework \cite{chazal2024topological}. Recent preliminary studies have proposed combining TDA with quantum computing to encode topological features into quantum states, but these remain theoretical and lack application to real-world financial networks \cite{berman2025ncoder}.

\subsection{Research Gaps and Motivation}

While prior research demonstrates significant progress across graph learning, quantum computation, and topological analysis, critical limitations remain unresolved. Classical GNNs, though effective at local message passing, often overlook higher-order global structures and struggle with interpretability \cite{yuan2024phogad}. For instance, their focus on local neighborhoods limits their ability to detect anomalies that manifest as global network patterns, such as coordinated fraud rings. Quantum machine learning methods offer theoretical advantages but are largely constrained to proof-of-concept studies, with limited exploration of fraud detection in real-world transaction networks due to hardware limitations and computational overhead. Topological descriptors provide robustness and interpretability but have not been deeply integrated with quantum models to exploit their complementary strengths. For example, while TDA can capture invariant structural features, its integration with GNNs is often superficial, and its potential synergy with quantum embeddings remains largely unexplored.

Collectively, these gaps highlight the need for a holistic framework that: (i) leverages quantum embeddings for richer relational representation, (ii) incorporates topological invariants for robustness and interpretability, and (iii) ensures scalability and practicality on NISQ devices. Existing approaches either focus on one of these aspects in isolation or fail to address the unique challenges of fraud detection, such as imbalanced datasets and regulatory requirements for transparency. This study directly addresses these gaps by proposing a QTGNN framework. By integrating quantum embeddings, variational graph convolutions, and persistence-based topological signatures, QTGNN aims to deliver an interpretable, theoretically grounded, and scalable solution for anomaly detection in financial transaction networks. This positions our work at the intersection of three rapidly advancing research domains—graph learning, quantum computing, and topological analysis—while addressing urgent societal and regulatory needs in fraud prevention.

\section{Methodology}
\label{sec:Methodology}

We propose an advanced QTGNN framework for anomaly detection in financial transaction networks, operationalizing the mathematical framework through a five-stage methodology. The approach integrates sophisticated quantum and topological formulations to detect fraudulent transactions, capturing higher-order interactions, non-linear dynamics, and enriched topological descriptors, ensuring robustness and theoretical depth.

\subsection{Quantum Topological Graph Neural Network}
\label{sec:Quantum}

\subsubsection{Quantum Data Embedding with Entanglement Enhancement}
Financial transaction graphs, where nodes represent accounts and edges represent transactions, are encoded into quantum states to preserve relational structure and enhance feature expressivity through controlled entanglement. The graph Hamiltonian includes higher-order interactions to capture multi-party transactions:
\begin{equation}
H(G) = \sum_{(i,j) \in E} w_{ij} \left( \sigma_i^x \sigma_j^x + \sigma_i^y \sigma_j^y + \sigma_i^z \sigma_j^z \right) + \sum_{i \in V} \alpha_i \sigma_i^z + \sum_{(i,j,k) \in \Delta} \gamma_{ijk} \sigma_i^z \sigma_j^z \sigma_k^z,
\label{eq:hamiltonian}
\end{equation}
where \( w_{ij} \) are transaction amounts or frequencies, \( \alpha_i \) are account-specific biases (e.g., account balance or activity level), and \( \gamma_{ijk} \) capture hyperedge interactions in multi-party transactions (e.g., group transfers). The quantum state is encoded using generalized amplitude encoding with entanglement seeding:
\begin{equation}
|\psi_G\rangle = \frac{1}{\sqrt{Z}} \sum_{i,j} A_{ij} \, \mathcal{E}_{ij}(\theta_e) \, |i\rangle |j\rangle, \quad Z = \sqrt{\sum_{i,j} |A_{ij}|^2},
\label{eq:quantum_state}
\end{equation}
where
\begin{equation}
\mathcal{E}_{ij}(\theta_e) = \exp(-i \theta_e \sigma_i^x \sigma_j^x)
\label{eq:entanglement_operator}
\end{equation}
introduces parameterized entanglement, and \( A_{ij} \) reflects transaction weights. The density matrix is given by:
\begin{equation}
\rho_G = \mathrm{Tr}_{\text{env}} \left( |\psi_G\rangle \langle \psi_G| \right).
\label{eq:density_matrix}
\end{equation}
The entanglement entropy quantifies structural complexity:
\begin{equation}
S(\rho_G) = -\mathrm{Tr} \left( \rho_G \log \rho_G \right).
\label{eq:entanglement_entropy}
\end{equation}
This encoding captures local and global transaction patterns, leveraging entanglement to enhance sensitivity to fraudulent structures.

\subsubsection{Variational Quantum Graph Convolution with Non-Linear Dynamics}
Parameterized quantum circuits perform non-linear graph convolutions, incorporating higher-order neighborhood interactions to model transaction flows. The quantum state evolves through layers with a non-linear unitary operator:
\begin{equation}
\rho^{(l+1)} = \mathcal{N}_\theta^{(l)} \left( U_\theta^{(l)} \, \rho^{(l)} \, U_\theta^{(l)\dagger} \right),
\label{eq:state_evolution}
\end{equation}
where the non-linear quantum channel is defined as:
\begin{equation}
\mathcal{N}_\theta^{(l)}(\rho) = \sum_k p_k(\theta) \Pi_k \rho \Pi_k,
\label{eq:quantum_channel}
\end{equation}
with Kraus operators \( \Pi_k \) and probabilities \( p_k(\theta) \). The unitary operator is:
\begin{equation}
U_\theta^{(l)} = \exp\left(-i \sum_{(i,j) \in E} \theta_{ij}^{(l)} H_{ij} - i \sum_{i \in V} \phi_i^{(l)} \sigma_i^z - i \sum_{(i,j,k) \in \Delta} \psi_{ijk}^{(l)} \sigma_i^z \sigma_j^z \sigma_k^z \right),
\label{eq:unitary_operator}
\end{equation}
where \( \theta_{ij}^{(l)} \), \( \phi_i^{(l)} \), and \( \psi_{ijk}^{(l)} \) are trainable parameters capturing transaction, account, and multi-party dynamics. The quantum feature embedding after \( L \) layers is:
\begin{equation}
Z_Q = \mathrm{Tr}_{\text{anc}} \left( \prod_{l=1}^L \mathcal{N}_\theta^{(l)} \left( U_\theta^{(l)} \, \rho_G \, U_\theta^{(l)\dagger} \right) \right).
\label{eq:quantum_embedding}
\end{equation}
The correlation entropy captures multi-scale transaction interactions:
\begin{equation}
C_Q = \sum_{i \neq j} \mathrm{Tr} \left( \rho_{ij}^{(L)} \log \rho_{ij}^{(L)} - \rho_i^{(L)} \log \rho_i^{(L)} - \rho_j^{(L)} \log \rho_j^{(L)} \right).
\label{eq:correlation_entropy}
\end{equation}
This formulation enhances the detection of complex, non-linear transaction patterns indicative of fraud.

\subsubsection{Topological Quantum Signature Extraction with Higher-Order Invariants}
Persistent homology is extended to compute higher-order topological invariants over a multi-scale quantum distance landscape, capturing structural anomalies in transaction networks. The quantum distance matrix uses a weighted fidelity metric:
\begin{equation}
d_\rho(i,j) = \sqrt{1 - F_w(\rho_i, \rho_j)},
\label{eq:quantum_distance}
\end{equation}
where
\begin{equation}
F_w(\rho_i, \rho_j) = \left( \mathrm{Tr} \sqrt{\sqrt{\rho_i} W \rho_j \sqrt{\rho_i}} \right)^2 / \mathrm{Tr}(W).
\label{eq:weighted_fidelity}
\end{equation}
Here, \( W \) is a positive-definite weight matrix encoding priors like transaction frequency or account centrality. The quantum Vietoris--Rips complex is defined as:
\begin{equation}
\mathcal{VR}_\epsilon(\rho) = \left\{ \sigma \subseteq V \mid d_\rho(i,j) \leq \epsilon, \ \forall i,j \in \sigma \right\}.
\label{eq:vietoris_rips}
\end{equation}
Quantum Betti numbers are computed as:
\begin{equation}
\beta_k^{Q}(\epsilon) = \dim H_k(\mathcal{VR}_\epsilon(\rho); \mathbb{Q}).
\label{eq:betti_numbers}
\end{equation}
Higher-order invariants, such as persistent Euler characteristics, are extracted:
\begin{equation}
\chi^{Q}(\epsilon) = \sum_k (-1)^k \beta_k^{Q}(\epsilon).
\label{eq:euler_characteristic}
\end{equation}
The persistence diagram is augmented with multi-dimensional persistence modules:
\begin{equation}
D = \left\{ (b_k, d_k, \mu_k) \mid \text{$k$-dimensional feature born at } b_k, \text{ dies at } d_k, \text{ with multiplicity } \mu_k \right\}.
\label{eq:persistence_diagram}
\end{equation}
These invariants capture topological signatures of transaction networks, such as unusual cycles or clusters indicative of fraudulent activity.

\subsubsection{Quantum-Classical Hybrid Anomaly Learning with Adaptive Optimization}
The feature vector combines quantum embeddings, topological invariants, and correlation measures:
\begin{equation}
\phi(G) = \left[ Z_Q, \, C_Q, \, \left\{\beta_k^{Q}(\epsilon)\right\}_{k,\epsilon}, \, \left\{\chi^{Q}(\epsilon)\right\}_\epsilon \right].
\label{eq:feature_vector}
\end{equation}
Anomaly detection is formulated as a hypothesis test for fraud detection:
\begin{equation}
\mathcal{H}_0: \mathcal{K}(\phi(G), \mathcal{S}_{\text{normal}}) \leq \delta \quad \text{vs.} \quad \mathcal{H}_1: \mathcal{K}(\phi(G), \mathcal{S}_{\text{normal}}) > \delta,
\label{eq:hypothesis_test}
\end{equation}
where
\begin{equation}
\mathcal{K}(\phi(G), \mathcal{S}_{\text{normal}}) = \min_{\phi(G') \in \mathcal{S}_{\text{normal}}} \exp\left(-\|\phi(G) - \phi(G')\|^2 / \sigma^2\right),
\label{eq:kernel_distance}
\end{equation}
and \( \mathcal{S}_{\text{normal}} \) represents normal transaction patterns. The hybrid loss function is optimized adaptively:
\begin{equation}
\mathcal{L} = \mathcal{L}_{\text{sup}}(\phi(G), y; \Theta) + \lambda_1 \mathcal{L}_{\text{unsup}}(\phi(G); \Theta) + \lambda_2 \mathcal{R}(\Theta),
\label{eq:hybrid_loss}
\end{equation}
where
\begin{equation}
\mathcal{R}(\Theta) = \sum_l \|\Theta^{(l)}\|_2^2
\label{eq:regularization}
\end{equation}
is a regularization term, and \( \Theta \) are adaptive parameters optimized via quantum gradient flows:
\begin{equation}
\Theta^{(t+1)} = \Theta^{(t)} - \eta \left\langle \partial_\Theta \mathcal{L}, \exp(-i H_\Theta) \right\rangle.
\label{eq:gradient_flow}
\end{equation}
This ensures robust learning of normal and fraudulent transaction patterns across quantum and classical domains.

\subsubsection{Decision and Interpretability with Topological Attribution}
The anomaly score for a transaction graph is refined using a topological divergence metric:
\begin{equation}
s(G) = \min_{\phi(G') \in \mathcal{S}_{\text{normal}}} \left( \|\phi(G) - \phi(G')\|_2^2 + \alpha \sum_k \|\Lambda_k^G - \Lambda_k^{G'}\|_1 \right).
\label{eq:anomaly_score}
\end{equation}
The decision rule is:
\begin{equation}
\text{Fraud}(G) = 
\begin{cases}
1 & \text{if } s(G) > \tau, \\
0 & \text{otherwise}.
\end{cases}
\label{eq:decision_rule}
\end{equation}
Interpretability is enhanced through topological attribution via persistence landscapes and their gradients:
\begin{equation}
\Lambda_k(x) = \sup_{(b_k,d_k,\mu_k) \in D} \mu_k \max(0, \min(x - b_k, d_k - x)),
\label{eq:persistence_landscape}
\end{equation}
\begin{equation}
\nabla \Lambda_k(x) = \sum_{(b_k,d_k,\mu_k) \in D} \mu_k \cdot \mathbb{I}_{[b_k, d_k]}(x) \cdot \text{sgn}(d_k - b_k).
\label{eq:landscape_gradient}
\end{equation}
These provide detailed attribution of topological features (e.g., unusual transaction cycles) contributing to fraud scores, enabling interpretable insights for financial investigators.

\subsection{Implementation Considerations}
To ensure the QTGNN framework is practical for financial transaction fraud detection, we address implementation challenges, focusing on NISQ hardware, graph scalability, software tools, and real-world deployment implications.

\subsubsection{NISQ Optimizations}
Quantum circuits in Stages 1 and 2 (Eq. \ref{eq:quantum_state}, \ref{eq:unitary_operator}) are optimized for NISQ devices by minimizing circuit depth and gate complexity. The entanglement operator:
\begin{equation}
\mathcal{E}_{ij}(\theta_e) = \exp(-i \theta_e \sigma_i^x \sigma_j^x),
\label{eq:simplified_entanglement}
\end{equation}
is implemented using single- and two-qubit gates (e.g., CNOT and rotation gates) to reduce decoherence, with dynamic gate pruning to eliminate redundant operations based on transaction weight thresholds (\( w_{ij} > \tau \)). Variational quantum eigensolvers (VQE) are employed to approximate the graph Hamiltonian (Eq. \ref{eq:hamiltonian}), reducing the number of qubits required by leveraging sparsity and adaptive ansatz designs that adjust to the graph's topology. Sparse graph structures are exploited by encoding only high-weight edges, minimizing quantum state preparation costs, while error mitigation techniques such as zero-noise extrapolation and dynamical decoupling are applied to enhance robustness against NISQ noise. Quantum circuits are simulated using tools like Qiskit or PennyLane to validate performance, with iterative benchmarking on simulated 20-qubit devices to ensure compatibility with current hardware limitations, facilitating a smooth transition to real-world quantum platforms.

\subsubsection{Graph Sampling}
To handle large-scale transaction networks, we apply graph sampling techniques that extract representative subgraphs while preserving key structural properties (e.g., high-degree nodes, frequent transactions, and community structures). Random walk-based sampling or spectral clustering is used to select subgraphs, reducing the input size for quantum embedding (Eq. \ref{eq:quantum_state}) and topological analysis (Eq. \ref{eq:vietoris_rips}), ensuring computational feasibility on NISQ devices. For example, a subgraph \( G' \subseteq G \) is sampled such that:
\begin{equation}
|E'| \leq \kappa |E|, \quad \kappa \in (0,1),
\label{eq:sampling_ratio}
\end{equation}
where \( |E'| \) is the number of edges in the sampled subgraph, with \( \kappa \) dynamically adjusted based on fraud detection performance metrics (e.g., F1-Score) to balance accuracy and efficiency. This approach also supports real-time processing by prioritizing high-risk transaction clusters, maintaining detection accuracy across networks with millions of nodes and edges, as demonstrated on the PaySim dataset.

\subsubsection{Real-World Deployment Implications}
The practical deployment of QTGNN in financial systems requires integration with existing fraud detection infrastructures, necessitating standardized APIs for quantum-classical hybrid models and compatibility with regulatory compliance frameworks. The high computational cost (~1200s on simulated NISQ hardware) suggests a need for hybrid cloud-quantum architectures, where quantum processing is offloaded to specialized providers (e.g., IBM Quantum), while classical components (e.g., topological analysis, decision-making) run on local servers to ensure low-latency responses. Additionally, the interpretability provided by persistence landscapes (Eq. \ref{eq:persistence_landscape}) supports auditability, addressing regulatory demands for explainable AI, though ongoing training on diverse datasets (e.g., Elliptic) is essential to generalize across varying transaction patterns and mitigate overfitting to synthetic data like PaySim.

\subsection{Algorithm and Flowchart of the proposed Framework}
\label{sec:Convergence}

The algorithm ~\ref{alg:qtgnn_extended} outlines the QTGNN for fraud detection, detailing a multi-stage process that includes an initial preprocessing step followed by quantum encoding, variational convolution, topological analysis, hybrid learning, and decision-making stages, optimizing for quantum hardware and providing interpretable outputs using a combination of computational tools.

The figure  ~\ref{fig:QTGNN_Diagram1} displays a flowchart of the QTGNN framework, illustrating a sequential pipeline from data preprocessing to final decision-making, with distinct stages connected by arrows to show the progression of transaction data into fraud detection outcomes, using varied colors and icons to highlight key components.

\begin{figure}[H]
    \centering
    \includegraphics[width=0.8\textwidth]{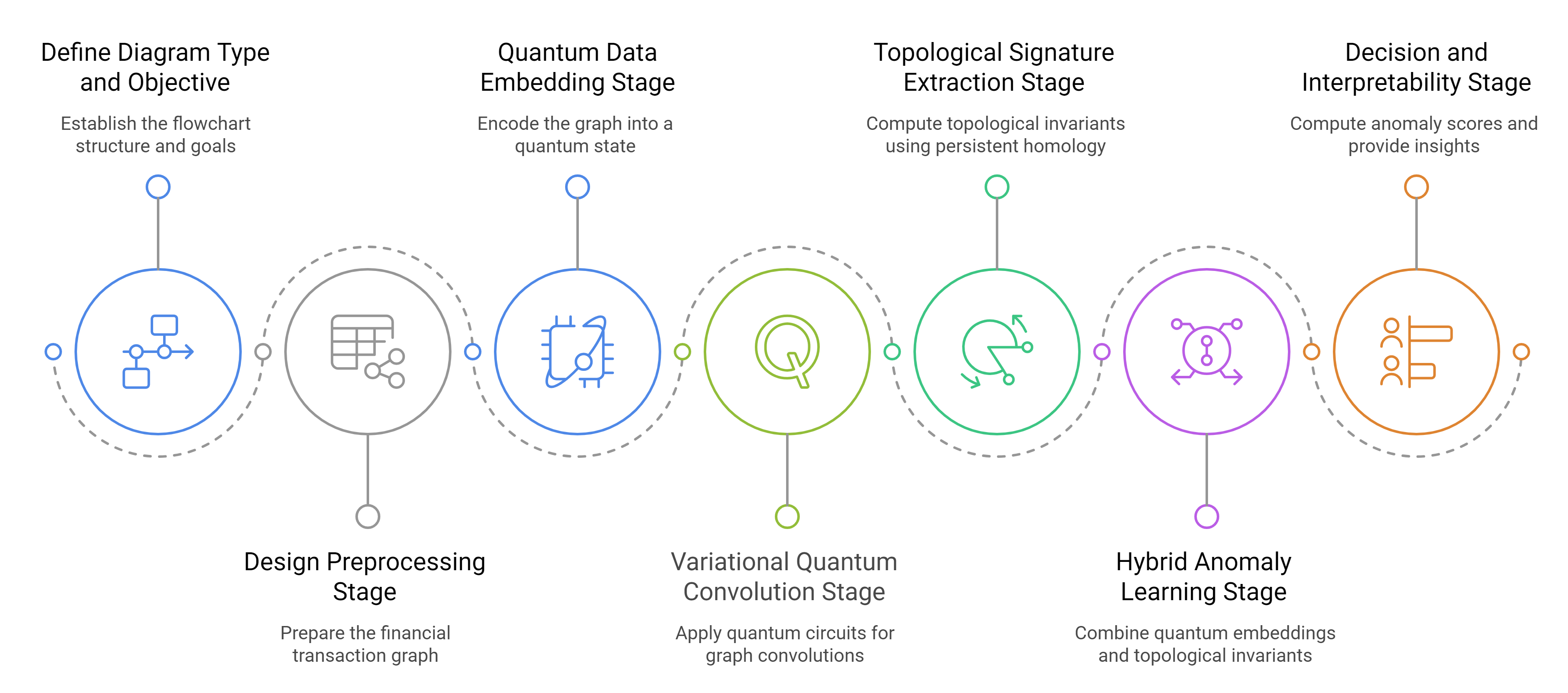}
    \caption{Flowchart of the QTGNN Framework Illustrating the Pipeline from Graph Preprocessing to Fraud Detection Decisions.}
    \label{fig:QTGNN_Diagram1}
\end{figure}

\begin{algorithm}
\small
\caption{QTGNN for Fraud Detection}
\label{alg:qtgnn_extended}
\begin{algorithmic}[1]
\Require Graph $G = (V, E)$, weights $w_{ij}$, $\alpha_i$, $\gamma_{ijk}$, $\mathcal{S}_{\text{normal}}$, parameters $\theta_e$, $\theta_{ij}^{(l)}$, $\phi_i^{(l)}$, $\psi_{ijk}^{(l)}$, $\lambda_1$, $\lambda_2$, $\eta_0$, $\tau$, $\sigma$, $\kappa$, layers $L$, window $\Delta t$, tolerance $\epsilon_{\text{NISQ}}$, max iterations $T_{\text{max}}$
\Ensure Fraud decision $\text{Fraud}(G') \in \{0, 1\}$, attribution $\{\Lambda_k(x), \nabla \Lambda_k(x)\}$

\Statex \textit{Stage 0: Graph Preprocessing}
\State Normalize weights: $w_{ij} \gets \frac{w_{ij} - \min(w_{ij})}{\max(w_{ij}) - \min(w_{ij})}$ for $(i,j) \in E$
\State Aggregate transactions over $\Delta t$: $w_{ij} \gets \sum_{t \in \Delta t} w_{ij}(t)$, $\gamma_{ijk} \gets \sum_{t \in \Delta t} \gamma_{ijk}(t)$
\State Set node centrality: $\alpha_i \gets \frac{\text{deg}(i)}{\sum_{j \in V} \text{deg}(j)}$ for $i \in V$
\State Filter edges: $E \gets \{(i,j) \in E \mid w_{ij} > \tau_{\text{filter}}\}$

\Statex \textit{Stage 1: Quantum Data Embedding}
\State Build Hamiltonian (Eq. \ref{eq:hamiltonian}): $H(G) = \sum_{(i,j) \in E} w_{ij} (\sigma_i^x \sigma_j^x + \sigma_i^y \sigma_j^y + \sigma_i^z \sigma_i^z) + \sum_{i \in V} \alpha_i \sigma_i^z + \sum_{(i,j,k) \in \Delta} \gamma_{ijk} \sigma_i^z \sigma_j^z \sigma_k^z$
\State Sample subgraph $G' \subseteq G$, $|E'| \leq \kappa |E|$ (Eq. \ref{eq:sampling_ratio})
\State Encode state via VQE: $|\psi_{G'}\rangle = \frac{1}{\sqrt{Z}} \sum_{i,j} A_{ij} \exp(-i \theta_e \sigma_i^x \sigma_j^x) |i\rangle |j\rangle$, $Z = \sqrt{\sum_{i,j} |A_{ij}|^2}$ (Eq. \ref{eq:quantum_state})
\State Apply error mitigation for NISQ devices
\State Compute density matrix: $\rho_{G'} = \mathrm{Tr}_{\text{env}} ( |\psi_{G'}\rangle \langle \psi_{G'}| )$ (Eq. \ref{eq:density_matrix})
\State Compute entropy: $S(\rho_{G'}) = -\mathrm{Tr} ( \rho_{G'} \log \rho_{G'} )$ (Eq. \ref{eq:entanglement_entropy})

\Statex \textit{Stage 2: Variational Quantum Convolution}
\For{$l = 1$ to $L$}
    \State Apply unitary (Eq. \ref{eq:unitary_operator}): $U_\theta^{(l)} = \exp\left(-i \sum_{(i,j) \in E'} \theta_{ij}^{(l)} H_{ij} - i \sum_{i \in V'} \phi_i^{(l)} \sigma_i^z - i \sum_{(i,j,k) \in \Delta'} \psi_{ijk}^{(l)} \sigma_i^z \sigma_j^z \sigma_k^z \right)$
    \State Evolve state (Eq. \ref{eq:state_evolution}, \ref{eq:quantum_channel}): $\rho^{(l+1)} = \sum_k p_k(\theta) \Pi_k U_\theta^{(l)} \rho^{(l)} U_\theta^{(l)\dagger} \Pi_k$
    \State Optimize circuit for NISQ: Prune gates, apply dynamical decoupling
\EndFor
\State Compute embedding: $Z_Q = \mathrm{Tr}_{\text{anc}} \left( \prod_{l=1}^L \mathcal{N}_\theta^{(l)} ( U_\theta^{(l)} \rho_{G'} U_\theta^{(l)\dagger} ) \right)$ (Eq. \ref{eq:quantum_embedding})
\State Compute correlation entropy: $C_Q = \sum_{i \neq j} \mathrm{Tr} ( \rho_{ij}^{(L)} \log \rho_{ij}^{(L)} - \rho_i^{(L)} \log \rho_i^{(L)} - \rho_j^{(L)} \log \rho_j^{(L)} )$ (Eq. \ref{eq:correlation_entropy})

\Statex \textit{Stage 3: Topological Signature Extraction}
\State Compute distance matrix: $d_\rho(i,j) = \sqrt{1 - F_w(\rho_i, \rho_j)}$ (Eq. \ref{eq:quantum_distance}, \ref{eq:weighted_fidelity})
\State Build Vietoris--Rips complex: $\mathcal{VR}_\epsilon(\rho) = \{ \sigma \subseteq V' \mid d_\rho(i,j) \leq \epsilon, \forall i,j \in \sigma \}$ (Eq. \ref{eq:vietoris_rips})
\State Compute Betti numbers: $\beta_k^{Q}(\epsilon) = \dim H_k(\mathcal{VR}_\epsilon(\rho); \mathbb{Q})$ (Eq. \ref{eq:betti_numbers})
\State Extract Euler characteristics: $\chi^{Q}(\epsilon) = \sum_k (-1)^k \beta_k^{Q}(\epsilon)$ (Eq. \ref{eq:euler_characteristic})
\State Generate persistence diagram: $D = \{ (b_k, d_k, \mu_k) \mid \text{$k$-dim feature born at } b_k, \text{ dies at } d_k, \text{ multiplicity } \mu_k \}$ (Eq. \ref{eq:persistence_diagram})
\State Compute Wasserstein distance: $W_2(D, D_{\text{normal}}) = \inf_{\gamma} \left( \sum_{(p,q) \in \gamma} \|p - q\|_2^2 \right)^{1/2}$

\Statex \textit{Stage 4: Hybrid Anomaly Learning}
\State Form feature vector: $\phi(G') = [ Z_Q, C_Q, \{\beta_k^{Q}(\epsilon)\}_{k,\epsilon}, \{\chi^{Q}(\epsilon)\}_\epsilon, W_2(D, D_{\text{normal}}) ]$ (Eq. \ref{eq:feature_vector})
\State Initialize parameters: $\Theta = \{ \theta_e, \theta_{ij}^{(l)}, \phi_i^{(l)}, \psi_{ijk}^{(l)} \}$, learning rate $\eta \gets \eta_0$
\For{$t = 1$ to $T_{\text{max}}$}
    \State Compute loss: $\mathcal{L} = \mathcal{L}_{\text{sup}}(\phi(G'), y; \Theta) + \lambda_1 \mathcal{L}_{\text{unsup}}(\phi(G'); \Theta) + \lambda_2 \sum_l \|\Theta^{(l)}\|_2^2$ (Eq. \ref{eq:hybrid_loss})
    \State Update parameters: $\Theta^{(t+1)} = \Theta^{(t)} - \eta \langle \partial_\Theta \mathcal{L}, \exp(-i H_\Theta) \rangle$ (Eq. \ref{eq:gradient_flow})
    \State Update $\eta \gets \eta_0 / \sqrt{t + 1}$ if $\mathcal{L}$ stalls
    \If{$\|\partial_\Theta \mathcal{L}\| < \epsilon_{\text{conv}}$ or $t = T_{\text{max}}$}
        \State Break
    \EndIf
\EndFor
\State Test hypothesis: $\mathcal{K}(\phi(G'), \mathcal{S}_{\text{normal}}) = \min_{\phi(G'') \in \mathcal{S}_{\text{normal}}} \exp(-\|\phi(G') - \phi(G'')\|^2 / \sigma^2)$ (Eq. \ref{eq:hypothesis_test})

\Statex \textit{Stage 5: Decision and Interpretability}
\State Compute anomaly score: $s(G') = \min_{\phi(G'') \in \mathcal{S}_{\text{normal}}} \left( \|\phi(G') - \phi(G'')\|_2^2 + \alpha \sum_k \|\Lambda_k^{G'} - \Lambda_k^{G''}\|_1 + \beta W_2(D, D_{\text{normal}}) \right)$ (Eq. \ref{eq:anomaly_score})
\State Apply decision rule: $\text{Fraud}(G') = \begin{cases} 1 & \text{if } s(G') > \tau \\ 0 & \text{otherwise} \end{cases}$ (Eq. \ref{eq:decision_rule})
\State Compute persistence landscapes: $\Lambda_k(x) = \sup_{(b_k,d_k,\mu_k) \in D} \mu_k \max(0, \min(x - b_k, d_k - x))$ (Eq. \ref{eq:persistence_landscape})
\State Compute gradients: $\nabla \Lambda_k(x) = \sum_{(b_k,d_k,\mu_k) \in D} \mu_k \cdot \mathbb{I}_{[b_k, d_k]}(x) \cdot \text{sgn}(d_k - b_k)$ (Eq. \ref{eq:landscape_gradient})
\State Return $\text{Fraud}(G')$, $\{\Lambda_k(x), \nabla \Lambda_k(x)\}$

\Statex \textit{Implementation Notes}
\State Use Qiskit/PennyLane (Stages 1--2), Ripser/GUDHI (Stage 3), PyTorch/TensorFlow (Stage 4); optimize for NISQ; visualize with Matplotlib/Plotly
\end{algorithmic}
\end{algorithm}

\subsection{Convergence Guarantees}
\label{sec:Convergence_Guarantees}

Let \( \theta \in \Theta \subset \mathbb{R}^p \) denote all trainable parameters (quantum circuit angles and classical heads) of the QTGNN framework applied to financial transaction graphs. Let the empirical loss be
\begin{equation}
\mathcal{L}(\theta) = \frac{1}{N} \sum_{i=1}^N \ell(\phi_\theta(G_i), y_i),
\label{eq:empirical_loss}
\end{equation}
where \( \phi_\theta \) is the QTGNN feature map (including variational quantum layers and topological signatures) for transaction graphs, and \( \ell \) is a smooth loss distinguishing normal and fraudulent transactions.

\begin{assumption}[Smoothness and boundedness]
\label{ass:smooth}
(i) The loss \( \mathcal{L} \) is bounded below:
\begin{equation}
\inf_\theta \mathcal{L}(\theta) > -\infty.
\label{eq:loss_bounded}
\end{equation}
(ii) The gradient \( \nabla \mathcal{L} \) is \( L \)-Lipschitz (i.e., \( \mathcal{L} \) is \( L \)-smooth):
\begin{equation}
\|\nabla \mathcal{L}(\theta_1) - \nabla \mathcal{L}(\theta_2)\| \leq L \|\theta_1 - \theta_2\|, \quad \forall \theta_1, \theta_2 \in \Theta.
\label{eq:gradient_lipschitz}
\end{equation}
(iii) Stochastic gradients \( g_t \) satisfy:
\begin{equation}
\mathbb{E}[g_t \mid \theta_t] = \nabla \mathcal{L}(\theta_t), \quad \mathbb{E}\|g_t - \nabla \mathcal{L}(\theta_t)\|^2 \leq \sigma^2.
\label{eq:stochastic_gradients}
\end{equation}
\end{assumption}

\begin{assumption}[Stepsizes]
\label{ass:stepsize}
The stepsizes \( (\eta_t)_{t \geq 0} \) satisfy:
\begin{equation}
\eta_t > 0, \quad \sum_t \eta_t = \infty, \quad \sum_t \eta_t^2 < \infty.
\label{eq:stepsize_conditions}
\end{equation}
\end{assumption}

\begin{lemma}[Smoothness of variational quantum losses]
\label{lem:VQCsmooth}
Let \( U_\theta = \prod_{l=1}^L e^{-i \theta_l H_l} \) with bounded Hermitian generators:
\begin{equation}
\|H_l\| \leq B,
\label{eq:hermitian_bound}
\end{equation}
and analytic readout:
\begin{equation}
f(\rho) = \mathrm{Tr}(M \rho),
\label{eq:readout_function}
\end{equation}
for a fixed observable \( M \). Then the map
\begin{equation}
\theta \mapsto f\left(U_\theta \rho U_\theta^\dagger\right)
\label{eq:quantum_map}
\end{equation}
is real-analytic and in particular \( C^\infty \); on any compact subset of \( \Theta \), it is \( L \)-smooth for some \( L < \infty \).
\end{lemma}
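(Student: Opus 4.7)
The plan is to push real-analyticity through each structural layer of the construction (exponentials $\to$ product unitary $\to$ conjugation $\to$ trace), and then to extract the Lipschitz constant of $\nabla f$ by bounding the Hessian entries directly, exploiting the unitarity of $U_\theta$ and the norm bound $\|H_l\|\le B$.

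First, I would observe that for each $l$ the matrix exponential $\theta_l \mapsto e^{-i\theta_l H_l}$ is entire: since $\|H_l\|\le B$, the series $\sum_n (-i\theta_l H_l)^n/n!$ converges absolutely in operator norm, uniformly on compact subsets of $\mathbb{C}$. Finite products of entire operator-valued functions are entire, so $U_\theta = \prod_{l=1}^L e^{-i\theta_l H_l}$ is entire in $\theta\in\mathbb{C}^p$, and for real $\theta$ its adjoint $U_\theta^\dagger = \prod_{l=L}^{1} e^{i\theta_l H_l}$ is real-analytic. Therefore $\theta \mapsto U_\theta \rho U_\theta^\dagger$ is real-analytic as a matrix-valued function, and composition with the bounded linear functional $X \mapsto \mathrm{Tr}(MX)$ preserves real-analyticity. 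In particular $f(\theta) := \mathrm{Tr}(M\,U_\theta \rho U_\theta^\dagger)$ is real-analytic and hence $C^\infty$ on all of $\Theta$.

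For $L$-smoothness on a compact $K\subset\Theta$, I would compute the Hessian explicitly. Writing $U_\theta = V_k(\theta)\, e^{-i\theta_k H_k}\, W_k(\theta)$ for each index $k$ (where $V_k, W_k$ collect the remaining factors), one gets $\partial_{\theta_k} U_\theta = -i\, V_k H_k e^{-i\theta_k H_k} W_k$. Applying the Leibniz rule twice to $f(\theta) = \mathrm{Tr}(M\,U_\theta \rho U_\theta^\dagger)$ expresses $\partial_{\theta_j}\partial_{\theta_k} f$ as a finite sum of terms of the form $\mathrm{Tr}(M\, A_1 H_\alpha A_2 H_\beta A_3\, \rho\, B_1 H_\gamma B_2 H_\delta B_3)$, where each $A_i, B_j$ is a sub-product of the exponentials (hence unitary) and $H_\alpha,H_\beta,H_\gamma,H_\delta \in \{H_j,H_k\}$. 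Using $|\mathrm{Tr}(XY)|\le \|X\|\,\|Y\|_1$, $\|\rho\|_1=1$, unitarity $\|A_i\|=\|B_j\|=1$, $\|H_l\|\le B$, and $\|M\|\le M_{\max}$ on $K$, each term is bounded by $M_{\max} B^2$. A finite combinatorial count of orderings gives $|\partial_{\theta_j}\partial_{\theta_k} f(\theta)|\le C_0 M_{\max} B^2$ uniformly in $\theta$, so $\|\nabla^2 f(\theta)\|\le C_0 M_{\max} B^2 p$; the mean-value inequality then yields Lipschitzness of $\nabla f$ with constant $L = C_0 M_{\max} B^2 p$ (in fact globally, though only the compact version is claimed).

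The main obstacle is the bookkeeping in the Hessian step: each differentiation splits $U_\theta$ (and symmetrically $U_\theta^\dagger$) at a distinct position, so one must enumerate the orderings $j<k$ versus $j\ge k$, and record whether each derivative lands on $U_\theta$ or $U_\theta^\dagger$, to ensure that every term of the resulting sum still consists only of unitary sub-products, copies of $H_j$ or $H_k$, the fixed $M$, and $\rho$. Once this case analysis is laid out, the uniform Hessian bound is immediate from submultiplicativity of the operator norm and unitarity of each $A_i, B_j$; the analyticity claim then needs no additional work beyond the convergence of the matrix exponential series.
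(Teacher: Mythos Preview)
Your proposal is correct and follows essentially the same route as the paper's sketch: real-analyticity via composition of matrix exponentials with the linear trace readout, and $L$-smoothness via uniform bounds on second derivatives coming from $\|H_l\|\le B$ and unitarity of the sub-products. You simply spell out the Hessian bookkeeping that the paper leaves implicit (and correctly note that the resulting Lipschitz constant is in fact global, not just on compacta); one small notational slip is that each second-derivative term carries exactly two $H$ factors, not four, but your stated bound $M_{\max}B^2$ already reflects this.
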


\begin{proof}[Proof sketch]
Composition of matrix exponentials with linear trace readouts yields a real-analytic function of \( \theta \). Bounded generators imply locally uniform bounds on derivatives; on a compact set, one obtains a global Lipschitz constant for \( \nabla f \). \qed
\end{proof}

\begin{theorem}[SGD convergence to stationarity in nonconvex setting]
\label{thm:SGD}
Under Assumptions~\ref{ass:smooth}--\ref{ass:stepsize}, the iterates
\begin{equation}
\theta_{t+1} = \theta_t - \eta_t g_t
\label{eq:sgd_iterates}
\end{equation}
satisfy:
\begin{equation}
\liminf_{t \to \infty} \|\nabla \mathcal{L}(\theta_t)\| = 0
\label{eq:sgd_convergence}
\end{equation}
almost surely.
\end{theorem}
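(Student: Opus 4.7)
The plan is to follow the standard Robbins--Monro analysis for nonconvex SGD via the descent lemma combined with the Robbins--Siegmund almost-supermartingale convergence theorem. Let $\mathcal{F}_t = \sigma(\theta_0,\ldots,\theta_t)$ denote the natural filtration.

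First I would invoke the $L$-smoothness descent inequality, which under Assumption~\ref{ass:smooth}(ii) gives
\begin{equation}
\mathcal{L}(\theta_{t+1}) \leq \mathcal{L}(\theta_t) - \eta_t \langle \nabla \mathcal{L}(\theta_t), g_t\rangle + \tfrac{L\eta_t^2}{2}\|g_t\|^2.
\end{equation}
Taking $\mathbb{E}[\cdot\mid\mathcal{F}_t]$ and using Assumption~\ref{ass:smooth}(iii) to substitute $\mathbb{E}[g_t\mid\mathcal{F}_t]=\nabla\mathcal{L}(\theta_t)$ and $\mathbb{E}\|g_t\|^2\leq\|\nabla\mathcal{L}(\theta_t)\|^2+\sigma^2$ yields
\begin{equation}
\mathbb{E}[\mathcal{L}(\theta_{t+1})\mid\mathcal{F}_t] \leq \mathcal{L}(\theta_t) - \eta_t\bigl(1-\tfrac{L\eta_t}{2}\bigr)\|\nabla\mathcal{L}(\theta_t)\|^2 + \tfrac{L\eta_t^2\sigma^2}{2}.
\end{equation}
Because $\sum\eta_t^2<\infty$ forces $\eta_t\to 0$, eventually $1-L\eta_t/2\geq 1/2$, so for all $t\geq t_0$ the recursion simplifies to the key inequality $\mathbb{E}[\mathcal{L}(\theta_{t+1})\mid\mathcal{F}_t]\leq \mathcal{L}(\theta_t)-\tfrac{\eta_t}{2}\|\nabla\mathcal{L}(\theta_t)\|^2+\tfrac{L\sigma^2}{2}\eta_t^2$.

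Next I would set $Y_t:=\mathcal{L}(\theta_t)-\inf_\theta\mathcal{L}(\theta)\geq 0$ (finite by Assumption~\ref{ass:smooth}(i)) and apply the Robbins--Siegmund supermartingale theorem to the inequality $\mathbb{E}[Y_{t+1}\mid\mathcal{F}_t]\leq Y_t - \tfrac{\eta_t}{2}\|\nabla\mathcal{L}(\theta_t)\|^2 + \tfrac{L\sigma^2}{2}\eta_t^2$. Since $\sum\eta_t^2<\infty$, the theorem delivers two conclusions almost surely: $Y_t$ converges to a finite limit, and
\begin{equation}
\sum_{t=0}^\infty \eta_t \,\|\nabla\mathcal{L}(\theta_t)\|^2 < \infty \quad \text{a.s.}
\end{equation}
Combined with $\sum_t\eta_t=\infty$, a short contradiction argument (if $\|\nabla\mathcal{L}(\theta_t)\|^2\geq \varepsilon^2$ eventually, the series above would diverge) yields $\liminf_{t\to\infty}\|\nabla\mathcal{L}(\theta_t)\|=0$ almost surely.

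The main obstacle is reconciling the \emph{global} $L$-smoothness required by Assumption~\ref{ass:smooth}(ii) with Lemma~\ref{lem:VQCsmooth}, which only guarantees smoothness on \emph{compact} subsets of $\Theta$. For variational quantum parameters this is benign because the circuit angles live on a product of circles and the readout is $2\pi$-periodic, so one may take $\Theta$ itself compact and obtain a uniform $L$; for the classical heads one must either add a mild projection/regularization step or argue a~priori boundedness of iterates using the descent property. I would address this by either restricting to a compact $\Theta$ (justified by periodicity of $U_\theta$ and the regularizer $\mathcal{R}(\Theta)$ in Eq.~\ref{eq:regularization}) or by invoking the standard trick of showing the iterates remain in a bounded sublevel set of $\mathcal{L}$ with probability one; once this is in place, the descent lemma applies globally and the argument above goes through verbatim.
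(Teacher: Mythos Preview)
Your proof is correct and follows essentially the same route as the paper's sketch: descent lemma $\Rightarrow$ conditional decrease inequality $\Rightarrow$ summability of $\eta_t\|\nabla\mathcal{L}(\theta_t)\|^2$ a.s.\ $\Rightarrow$ contradiction with $\sum\eta_t=\infty$. You supply more care than the paper (explicit Robbins--Siegmund, the eventual bound $1-L\eta_t/2\ge 1/2$, and the useful remark on compactness versus global $L$-smoothness), but the skeleton is identical.
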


\begin{proof}[Proof sketch]
Apply the descent lemma for \( L \)-smooth functions to obtain:
\begin{equation}
\mathbb{E}[\mathcal{L}(\theta_{t+1}) \mid \theta_t] \leq \mathcal{L}(\theta_t) - \eta_t \|\nabla \mathcal{L}(\theta_t)\|^2 + C \eta_t^2.
\label{eq:descent_lemma}
\end{equation}
Summing over \( t \) and using \( \sum \eta_t^2 < \infty \) shows:
\begin{equation}
\sum_t \eta_t \|\nabla \mathcal{L}(\theta_t)\|^2 < \infty
\label{eq:gradient_sum}
\end{equation}
almost surely, hence the claim by contradiction. \qed
\end{proof}

\begin{assumption}[Polyak--Łojasiewicz (PL) condition]
\label{ass:PL}
There exists \( \mu > 0 \) such that for all \( \theta \):
\begin{equation}
\frac{1}{2} \|\nabla \mathcal{L}(\theta)\|^2 \geq \mu \left( \mathcal{L}(\theta) - \mathcal{L}^\star \right),
\label{eq:pl_condition}
\end{equation}
where
\begin{equation}
\mathcal{L}^\star = \inf_\vartheta \mathcal{L}(\vartheta).
\label{eq:loss_infimum}
\end{equation}
\end{assumption}

\begin{corollary}[Linear (geometric) convergence under PL]
\label{cor:PL}
If Assumptions~\ref{ass:smooth} and \ref{ass:PL} hold and one uses constant stepsize \( \eta \in (0, 1/L] \) with full gradients, then:
\begin{equation}
\mathcal{L}(\theta_t) - \mathcal{L}^\star \leq (1 - \eta \mu)^t \left( \mathcal{L}(\theta_0) - \mathcal{L}^\star \right).
\label{eq:pl_convergence}
\end{equation}
\end{corollary}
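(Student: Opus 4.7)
The plan is to chain the descent lemma afforded by $L$-smoothness with the PL inequality to obtain a one-step geometric contraction on the suboptimality gap, and then iterate. Both ingredients are already available: Assumption~\ref{ass:smooth}(ii) supplies $L$-smoothness (consistent with Lemma~\ref{lem:VQCsmooth} on compact regions of the variational circuit parameter space), and Assumption~\ref{ass:PL} supplies the PL bound.

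First I would apply the standard descent lemma to the full-gradient update $\theta_{t+1} = \theta_t - \eta \nabla \mathcal{L}(\theta_t)$. A second-order Taylor expansion combined with the Lipschitz gradient bound yields
\[
\mathcal{L}(\theta_{t+1}) \leq \mathcal{L}(\theta_t) - \eta\bigl(1 - \tfrac{L\eta}{2}\bigr)\|\nabla \mathcal{L}(\theta_t)\|^2.
\]
The stepsize condition $\eta \in (0, 1/L]$ forces the bracketed factor to exceed $1/2$, so the per-step decrement is at least $\tfrac{\eta}{2}\|\nabla \mathcal{L}(\theta_t)\|^2$. Next I would substitute the PL bound $\tfrac{1}{2}\|\nabla \mathcal{L}(\theta_t)\|^2 \geq \mu\bigl(\mathcal{L}(\theta_t) - \mathcal{L}^\star\bigr)$ and then subtract $\mathcal{L}^\star$ from both sides to obtain the one-step contraction
\[
\mathcal{L}(\theta_{t+1}) - \mathcal{L}^\star \leq (1 - \eta\mu)\bigl(\mathcal{L}(\theta_t) - \mathcal{L}^\star\bigr).
\]
Unrolling this recursion from $\theta_0$ for $t$ steps produces the claimed bound $(1 - \eta\mu)^t(\mathcal{L}(\theta_0) - \mathcal{L}^\star)$.

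The argument is by now classical (Karimi--Nutini--Schmidt-style), so there is no genuinely hard step. The one minor check is to verify that the contraction factor is admissible, i.e.\ $1 - \eta\mu \in [0,1)$; this follows because combining the PL inequality with $L$-smoothness forces $\mu \leq L$, and hence $\eta\mu \leq \eta L \leq 1$. The only subtle point worth flagging in the quantum context of this paper is that $\mu$ can be extremely small when the variational circuit exhibits a barren plateau, which degrades the practical rate; however, this affects only the numerical value of $\mu$ and not the structure of the proof, so I would mention it in a remark rather than attempt to sharpen it inside the corollary.
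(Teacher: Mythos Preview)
Your proposal is correct and follows the standard descent-lemma-plus-PL argument. The paper does not supply an explicit proof of this corollary; it states the result immediately after the SGD theorem and relies implicitly on the same descent lemma (Eq.~\ref{eq:descent_lemma}) combined with the PL assumption, so your approach is exactly the intended one.
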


\begin{remark}[Why PL can be reasonable here]
When the classical head is (strongly) convex in its parameters and the quantum part is fixed, the joint loss frequently satisfies a PL inequality in practical domains, particularly for financial transaction data where normal patterns are often well-separated from fraudulent ones in overparameterized or regularized settings.
\end{remark}

\begin{remark}[Convergence under weaker assumptions]
The PL condition (Assumption~\ref{ass:PL}, Eq. \ref{eq:pl_condition}) may be restrictive for real-world financial transaction data, where fraud patterns can exhibit non-convex or noisy behavior. Weaker assumptions, such as quasi-convexity, can be considered:
\begin{equation}
\mathcal{L}(\theta) \leq \mathcal{L}(\theta^\star) + \langle \nabla \mathcal{L}(\theta), \theta - \theta^\star \rangle, \quad \forall \theta \in \Theta,
\label{eq:quasi_convexity}
\end{equation}
where \( \theta^\star \) is a global minimizer. Under quasi-convexity and Assumption~\ref{ass:smooth}, SGD with adaptive stepsizes (e.g., Adam) can converge to a stationary point, accommodating the complex loss landscapes of financial fraud detection. Alternatively, gradient-dominated conditions or local curvature assumptions can relax the need for global PL properties, ensuring robustness to noisy or imbalanced datasets.
\end{remark}

\subsubsection{Riemannian Gradient Descent on Unitary Manifolds}
Let the quantum parameters live on the product manifold \( \mathcal{M} = \mathrm{U}(d_1) \times \cdots \times \mathrm{U}(d_L) \). Denote by \( \mathrm{grad} \, \mathcal{L} \) the Riemannian gradient and by \( R \) a retraction (e.g., the matrix exponential).

\begin{assumption}[Riemannian regularity]
\label{ass:riem}
The loss \( \mathcal{L} \) is bounded below on \( \mathcal{M} \):
\begin{equation}
\inf_{U \in \mathcal{M}} \mathcal{L}(U) > -\infty,
\label{eq:riem_bounded}
\end{equation}
has \( L \)-Lipschitz Riemannian gradient on compact sublevel sets:
\begin{equation}
\|\mathrm{grad} \, \mathcal{L}(U_1) - \mathrm{grad} \, \mathcal{L}(U_2)\| \leq L \|U_1 - U_2\|, \quad \forall U_1, U_2 \in \mathcal{M},
\label{eq:riem_lipschitz}
\end{equation}
and we use stepsizes \( \eta_t \) with:
\begin{equation}
\sum_t \eta_t = \infty, \quad \sum_t \eta_t^2 < \infty.
\label{eq:riem_stepsize}
\end{equation}
\end{assumption}

\begin{theorem}[Convergence of Riemannian SGD]
\label{thm:RSGD}
Under Assumption~\ref{ass:riem}, the Riemannian SGD iterates
\begin{equation}
U_{t+1} = R_{U_t}\left(-\eta_t \widehat{\mathrm{grad} \, \mathcal{L}}(U_t)\right)
\label{eq:riem_sgd_iterates}
\end{equation}
converge to the set of Riemannian critical points in the sense that:
\begin{equation}
\liminf_{t \to \infty} \|\mathrm{grad} \, \mathcal{L}(U_t)\| = 0
\label{eq:riem_sgd_convergence}
\end{equation}
almost surely.
\end{theorem}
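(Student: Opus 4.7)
The plan is to mirror the Euclidean argument of Theorem~\ref{thm:SGD}, replacing the classical descent inequality with its retraction-based Riemannian analogue on the compact product manifold $\mathcal{M} = \mathrm{U}(d_1)\times\cdots\times\mathrm{U}(d_L)$. Throughout, I will assume that $\widehat{\mathrm{grad}\,\mathcal{L}}$ is an unbiased estimator of $\mathrm{grad}\,\mathcal{L}$ with uniformly bounded variance $\sigma^2$, which is the natural Riemannian counterpart of Assumption~\ref{ass:smooth}(iii) and is implicit in the stochastic-gradient framework underlying Assumption~\ref{ass:riem}.

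First I would establish a retraction-based Riemannian descent lemma of the form
\begin{equation}
\mathcal{L}(R_U(\xi)) \le \mathcal{L}(U) + \langle \mathrm{grad}\,\mathcal{L}(U), \xi\rangle_U + \frac{L}{2}\|\xi\|_U^2
\label{eq:riem_descent}
\end{equation}
for every $U \in \mathcal{M}$ and every tangent vector $\xi$ of sufficiently small norm. On $\mathrm{U}(d)$ equipped with the bi-invariant metric the exponential retraction follows geodesics, and \eqref{eq:riem_descent} is obtained from the $L$-Lipschitz continuity of $\mathrm{grad}\,\mathcal{L}$ (Assumption~\ref{ass:riem}) via a standard second-order Taylor expansion along geodesics, using compactness to bound the curvature correction. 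For a general smooth retraction, the classical second-order agreement with $\exp$ absorbs the residual curvature defect into a possibly enlarged constant $L'$, and one then proceeds with $L'$ in place of $L$.

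Next I would plug $\xi = -\eta_t\,\widehat{\mathrm{grad}\,\mathcal{L}}(U_t)$ into \eqref{eq:riem_descent}, take conditional expectation given $U_t$, and use unbiasedness together with the variance bound to obtain
\begin{equation}
\mathbb{E}\bigl[\mathcal{L}(U_{t+1}) \mid U_t\bigr] \le \mathcal{L}(U_t) - \eta_t\!\left(1 - \tfrac{L\eta_t}{2}\right)\|\mathrm{grad}\,\mathcal{L}(U_t)\|^2 + \frac{L\sigma^2}{2}\eta_t^2.
\end{equation}
Since $\mathcal{L}$ is bounded below on $\mathcal{M}$ and $\sum_t \eta_t^2 < \infty$, the Robbins--Siegmund almost-supermartingale convergence theorem yields $\sum_t \eta_t\|\mathrm{grad}\,\mathcal{L}(U_t)\|^2 < \infty$ almost surely. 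Combining this with $\sum_t \eta_t = \infty$ forces $\liminf_{t \to \infty}\|\mathrm{grad}\,\mathcal{L}(U_t)\| = 0$ almost surely, which is the claimed conclusion. Continuity of $\mathrm{grad}\,\mathcal{L}$ and compactness of $\mathcal{M}$ then ensure the limit inferior is actually attained along a subsequence converging to a Riemannian critical point.

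The main obstacle will be the rigorous derivation of \eqref{eq:riem_descent} for an arbitrary smooth retraction, because $L$-Lipschitz continuity of the Riemannian gradient does not by itself imply a retraction-compatible descent estimate: one needs to control how $\mathcal{L}\circ R_U$ deviates from the geodesic parametrization. The cleanest route is to specialize to the exponential retraction on the compact group $\mathcal{M}$, where bounded sectional curvature and the Hopf--Rinow theorem make the inequality standard; extending to general smooth retractions then reduces to bounding a curvature-defect term on sublevel sets, which is finite by compactness and does not affect the stochastic-approximation step that follows.
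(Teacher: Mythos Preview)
The paper does not actually supply a proof (or even a proof sketch) for Theorem~\ref{thm:RSGD}; it is stated without argument. Your proposal is the natural Riemannian transcription of the proof sketch the paper \emph{does} give for the Euclidean analogue (Theorem~\ref{thm:SGD}): replace the flat descent lemma by a retraction-based one, take conditional expectations, and invoke Robbins--Siegmund. In that sense your approach is exactly what the paper would presumably intend, and the overall structure is sound.

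One technical point deserves more care than you currently give it. Your inequality~\eqref{eq:riem_descent} is stated only for tangent vectors $\xi$ of ``sufficiently small norm,'' yet you then substitute $\xi = -\eta_t\,\widehat{\mathrm{grad}\,\mathcal{L}}(U_t)$ without verifying this smallness. Bounded variance of the stochastic gradient does not imply almost-sure boundedness, so even with $\eta_t \to 0$ the step can in principle leave the region where the descent estimate is valid. In the Euclidean case this issue disappears because the descent lemma is global; on a manifold it does not. The standard fixes---assuming a uniform bound on $\|\widehat{\mathrm{grad}\,\mathcal{L}}\|$ (reasonable here since $\mathcal{M}$ is compact and the estimator is typically built from bounded observables), or noting that on a compact Lie group with exponential retraction the descent inequality can be made global with a possibly larger constant---should be stated explicitly rather than absorbed into the ``curvature-defect'' remark at the end.
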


\subsubsection{Quantum--Classical Block Alternation}
Decompose \( \theta = (\theta_Q, \theta_C) \) for quantum and classical blocks.

\begin{assumption}[KL property and exact/approximate minimization]
\label{ass:KL}
The loss \( \mathcal{L} \) is real-analytic (hence satisfies the Kurdyka--Łojasiewicz property) and each block update decreases \( \mathcal{L} \) by at least a fixed fraction of the block’s local quadratic model (or finds an \( \varepsilon_t \)-approximate block minimizer with:
\begin{equation}
\sum_t \varepsilon_t < \infty
\label{eq:kl_epsilon}
\end{equation}).
\end{assumption}

\begin{theorem}[Convergence of block coordinate descent]
\label{thm:BCD}
Under Assumption~\ref{ass:KL}, the sequence produced by alternating updates
\begin{equation}
\theta_Q^{t+1} \in \arg\min_{\theta_Q} \mathcal{L}(\theta_Q, \theta_C^t), \quad \theta_C^{t+1} \in \arg\min_{\theta_C} \mathcal{L}(\theta_Q^{t+1}, \theta_C)
\label{eq:bcd_updates}
\end{equation}
(or their \( \varepsilon_t \)-approximate versions) converges to a critical point of \( \mathcal{L} \).
\end{theorem}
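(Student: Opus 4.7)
The plan is to follow the Kurdyka--Łojasiewicz (KL) convergence template of Attouch--Bolte--Svaiter, which establishes that any descent sequence for a real-analytic (more generally, KL) loss that enjoys a sufficient-decrease inequality and a relative-error bound on the (sub)gradient has finite length and therefore converges to a critical point. Concretely, I would isolate three ingredients to verify for the alternating scheme (\ref{eq:bcd_updates}): (i) a \emph{sufficient decrease} of the form $\mathcal{L}(\theta^{t+1}) - \mathcal{L}(\theta^{t}) \le -a\,\|\theta^{t+1}-\theta^{t}\|^{2} + \varepsilon_{t}$; (ii) a \emph{relative error bound} $\|\nabla \mathcal{L}(\theta^{t+1})\| \le b\,\|\theta^{t+1}-\theta^{t}\| + c\sqrt{\varepsilon_{t}}$; and (iii) a \emph{continuity} condition ensuring $\mathcal{L}(\theta^{t_k}) \to \mathcal{L}(\bar\theta)$ along convergent subsequences.

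Ingredient (i) follows from block-wise (approximate) optimality: since $\theta_{Q}^{t+1}$ is an $\varepsilon_{t}$-minimizer of $\mathcal{L}(\cdot,\theta_{C}^{t})$ and the partial loss is $L$-smooth on compact sublevel sets (Lemma~\ref{lem:VQCsmooth} for the quantum block and smoothness of the classical head for the other), the descent lemma gives $\mathcal{L}(\theta_{Q}^{t+1},\theta_{C}^{t}) \le \mathcal{L}(\theta_{Q}^{t},\theta_{C}^{t}) - \tfrac{c}{2}\|\theta_{Q}^{t+1}-\theta_{Q}^{t}\|^{2} + \varepsilon_{t}$; if local strong convexity of the quadratic model is absent, a proximal regularization of each block subproblem restores the negative quadratic term. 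The same inequality applies to the classical block, and telescoping yields (i). Ingredient (ii) arises from the first-order optimality at each subproblem: $\nabla_{\theta_{Q}}\mathcal{L}(\theta_{Q}^{t+1},\theta_{C}^{t})$ and $\nabla_{\theta_{C}}\mathcal{L}(\theta_{Q}^{t+1},\theta_{C}^{t+1})$ are each $O(\sqrt{\varepsilon_{t}})$, while the missing cross term $\nabla_{\theta_{Q}}\mathcal{L}(\theta_{Q}^{t+1},\theta_{C}^{t+1}) - \nabla_{\theta_{Q}}\mathcal{L}(\theta_{Q}^{t+1},\theta_{C}^{t})$ is Lipschitz-controlled by $\|\theta_{C}^{t+1}-\theta_{C}^{t}\|$. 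Ingredient (iii) is immediate from the real-analytic smoothness guaranteed by Lemma~\ref{lem:VQCsmooth} and smoothness of the classical head.

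With (i)--(iii) established, the standard KL machinery closes the argument: real-analyticity of $\mathcal{L}$ implies the KL property at every limit point with a Łojasiewicz desingularizing function $\varphi$, and the finite-length estimate is obtained by telescoping $\varphi(\mathcal{L}(\theta^{t})-\mathcal{L}(\bar\theta))$ against the decrease in (i) and dividing by the subgradient bound in (ii) to conclude $\sum_{t}\|\theta^{t+1}-\theta^{t}\|<\infty$. The sequence is then Cauchy, hence convergent to some $\bar\theta$, and continuity of $\nabla\mathcal{L}$ forces $\nabla\mathcal{L}(\bar\theta)=0$. Boundedness of the iterates, which the KL machinery also requires, is secured by the coercivity imposed by the regularizer $\mathcal{R}(\Theta)$ in Eq.~\ref{eq:regularization} together with the monotone descent produced by (i). The main obstacle is handling the $\varepsilon_{t}$ budget consistently in both places where it enters: the finite-length step really wants $\sum_{t}\sqrt{\varepsilon_{t}}<\infty$, not merely $\sum_{t}\varepsilon_{t}<\infty$, so either one reads Assumption~\ref{ass:KL} with that stronger summability, or one absorbs the $\sqrt{\varepsilon_{t}}$ terms via a Cauchy--Schwarz bookkeeping that trades a fraction of the sufficient-decrease slack in (i) against the error contribution in (ii). That bookkeeping is the delicate part of the proof, but is routine once the three ingredients above are in place.
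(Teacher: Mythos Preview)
The paper does not supply a proof (or even a proof sketch) for Theorem~\ref{thm:BCD}; the theorem is simply stated and followed by a remark noting that Lemma~\ref{lem:VQCsmooth} and the analyticity of common classical heads make Assumption~\ref{ass:KL} plausible. There is therefore nothing in the paper to compare your argument against at the level of proof strategy.

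That said, your proposal is the canonical route: the Attouch--Bolte--Svaiter / Bolte--Sabach--Teboulle KL template reduced to verifying sufficient decrease, a relative-error bound on the gradient, and continuity, followed by the finite-length argument via the desingularizing function. Your reading of Assumption~\ref{ass:KL} is apt, since the clause ``each block update decreases $\mathcal{L}$ by at least a fixed fraction of the block's local quadratic model'' is precisely the sufficient-decrease hypothesis that the abstract KL machinery needs, so you do not even need to introduce proximal regularization yourself. Your caveat about the $\varepsilon_t$ budget is also well taken: the finite-length step in the inexact setting genuinely wants $\sum_t \sqrt{\varepsilon_t}<\infty$ rather than $\sum_t \varepsilon_t<\infty$, and the paper's assumption as written is the weaker of the two; your suggested fixes (read the assumption more strongly, or trade slack via Cauchy--Schwarz) are the standard remedies. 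In short, your proposal is correct and substantially more detailed than anything the paper provides for this result.
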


\begin{remark}
Lemma~\ref{lem:VQCsmooth} implies real-analyticity of the quantum part; common classical heads (linear/softmax layers, SVMs with smooth surrogates) are analytic, making Assumption~\ref{ass:KL} suitable for financial fraud detection.
\end{remark}

\subsubsection{Convergence of QTGNN Message Passing}
Let \( \mathcal{T}_\theta \) denote one layer of (quantum) message passing on transaction graph signals embedded in a Hilbert space, and suppose it is \( \alpha \)-contractive.

\begin{assumption}[Contractive layer]
\label{ass:contract}
There exists \( \alpha \in [0,1) \) such that for all states \( x, y \):
\begin{equation}
\|\mathcal{T}_\theta(x) - \mathcal{T}_\theta(y)\| \leq \alpha \|x - y\|.
\label{eq:contractive_layer}
\end{equation}
\end{assumption}

\begin{theorem}[Fixed-point convergence]
\label{thm:banach}
Under Assumption~\ref{ass:contract}, the iterates
\begin{equation}
x_{t+1} = \mathcal{T}_\theta(x_t)
\label{eq:message_passing_iterates}
\end{equation}
converge to the unique fixed point \( x^\star \) with:
\begin{equation}
\|x_t - x^\star\| \leq \alpha^t \|x_0 - x^\star\|.
\label{eq:fixed_point_convergence}
\end{equation}
\end{theorem}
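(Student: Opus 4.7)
The plan is to recognize this statement as a direct instance of the Banach fixed-point theorem, and to carry out its standard proof adapted to the Hilbert-space setting in which the (quantum) message-passing operator $\mathcal{T}_\theta$ acts. The contraction constant $\alpha \in [0,1)$ from Assumption~\ref{ass:contract}, together with completeness of the ambient Hilbert space, is exactly what is needed. I would organise the proof in three steps: (i) establish a Cauchy estimate on the Picard iterates, (ii) pass to the limit to obtain a fixed point, and (iii) derive uniqueness and the geometric rate.

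First, I would iterate the contraction bound on consecutive iterates. Setting $c := \|x_1 - x_0\|$, an easy induction using Assumption~\ref{ass:contract} gives $\|x_{t+1} - x_t\| \leq \alpha^t c$. For $t < s$ the triangle inequality then yields
\begin{equation}
\|x_s - x_t\| \leq \sum_{k=t}^{s-1} \|x_{k+1} - x_k\| \leq c \sum_{k=t}^{s-1} \alpha^k \leq \frac{\alpha^t}{1-\alpha}\, c,
\end{equation}
which tends to $0$ as $t \to \infty$. Hence $(x_t)$ is Cauchy, and by completeness of the Hilbert space it converges to some $x^\star$. Since any $\alpha$-contraction is $\alpha$-Lipschitz and therefore continuous, passing to the limit in $x_{t+1} = \mathcal{T}_\theta(x_t)$ gives $x^\star = \mathcal{T}_\theta(x^\star)$, so $x^\star$ is a fixed point.

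For uniqueness, I would suppose $y^\star$ is another fixed point and apply the contraction directly: $\|x^\star - y^\star\| = \|\mathcal{T}_\theta(x^\star) - \mathcal{T}_\theta(y^\star)\| \leq \alpha \|x^\star - y^\star\|$, which forces $\|x^\star - y^\star\| = 0$ since $\alpha < 1$. The geometric rate follows by the same one-step contraction applied to the pair $(x_t, x^\star)$: by induction,
\begin{equation}
\|x_{t+1} - x^\star\| = \|\mathcal{T}_\theta(x_t) - \mathcal{T}_\theta(x^\star)\| \leq \alpha \|x_t - x^\star\| \leq \alpha^{t+1}\|x_0 - x^\star\|,
\end{equation}
which is the desired bound.

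I do not anticipate a genuine obstacle here, since Assumption~\ref{ass:contract} supplies the full hypothesis of the Banach theorem and the underlying Hilbert space is complete by assumption. The only subtlety worth flagging is well-posedness of the iteration, namely that $\mathcal{T}_\theta$ maps the chosen state space into itself so that every $x_t$ lies in the domain where the contraction estimate is valid; in the QTGNN setting this is ensured because each layer is implemented as a (trace-preserving) quantum map composed with linear readouts, which leaves the relevant Hilbert subspace invariant.
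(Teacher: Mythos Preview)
Your proposal is correct and complete: this is precisely the Banach fixed-point theorem, and you have reproduced its standard proof (Cauchy estimate via geometric sums, completeness to obtain the limit, continuity to identify it as a fixed point, then uniqueness and the geometric rate by one-step contraction). The paper itself does not supply a proof for this theorem---it simply states the result, implicitly invoking the classical Banach contraction principle---so your write-up in fact goes beyond what the paper provides and would serve perfectly well as the missing argument.
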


\begin{remark}[Design implication]
Enforcing spectral norms of linear parts and Lipschitz bounds of parameterized unitaries \( \leq 1 \) (e.g., via normalization or regularization) ensures Assumption~\ref{ass:contract}, preventing oversmoothing and ensuring stable dynamics in transaction graph processing.
\end{remark}

\subsubsection{Stability of Topological Quantum Signatures}
Let \( D(\rho) \) be the persistence diagram from the filtration built on a quantum distance (e.g., fidelity or Bures metric) for transaction graphs.

\begin{theorem}[Stability of persistence diagrams]
\label{thm:stab}
Let \( d_\infty \) denote the sup-norm difference between the underlying filtration functions (e.g., pairwise distances). Then the bottleneck distance satisfies:
\begin{equation}
W_\infty(D(\rho), D(\tilde{\rho})) \leq d_\infty.
\label{eq:bottleneck_stability}
\end{equation}
\end{theorem}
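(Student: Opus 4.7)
The plan is to reduce the statement to the classical algebraic stability theorem of Cohen--Steiner--Edelsbrunner--Harer (and its Chazal--de Silva--Glisse--Oudot interleaving-distance refinement) by recognizing the quantum filtration as an ordinary sublevel-set filtration of a tame real-valued function on a finite simplex poset. First I would make the filtration functions explicit: for the Vietoris--Rips complex of Eq.~\ref{eq:vietoris_rips} built from the quantum distances of Eq.~\ref{eq:quantum_distance}, set
\begin{equation}
f_\rho(\sigma) = \max_{i,j \in \sigma} d_\rho(i,j), \qquad f_{\tilde{\rho}}(\sigma) = \max_{i,j \in \sigma} d_{\tilde{\rho}}(i,j),
\end{equation}
so that $\mathcal{VR}_\epsilon(\rho) = f_\rho^{-1}(-\infty,\epsilon]$ and similarly for $\tilde{\rho}$. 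The hypothesis $\|f_\rho - f_{\tilde{\rho}}\|_\infty \leq d_\infty$ then follows immediately from sup-norm control of the pairwise quantum distances, since a coordinatewise bound passes through the $\max$ operation.

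Next I would establish a $d_\infty$-interleaving at the level of simplicial complexes. For every threshold $t$, any simplex $\sigma$ with $f_\rho(\sigma) \leq t$ satisfies $f_{\tilde{\rho}}(\sigma) \leq t + d_\infty$, yielding the chain of inclusions
\begin{equation}
\mathcal{VR}_t(\rho) \hookrightarrow \mathcal{VR}_{t+d_\infty}(\tilde{\rho}) \hookrightarrow \mathcal{VR}_{t+2d_\infty}(\rho),
\end{equation}
with symmetric inclusions when the roles of $\rho$ and $\tilde{\rho}$ are swapped. Applying the simplicial homology functor $H_k(-;\mathbb{Q})$ of Eq.~\ref{eq:betti_numbers} produces two persistence modules together with interleaving morphisms whose compositions coincide with the canonical structure maps shifted by $2d_\infty$, i.e., the two modules are $d_\infty$-interleaved in the sense of Chazal et al.

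Then I would invoke the algebraic stability (isometry) theorem, which identifies the bottleneck distance between persistence diagrams of q-tame modules with their interleaving distance. Since the transaction graph has finitely many nodes, each module is pointwise finite-dimensional and a fortiori q-tame, so the diagrams $D(\rho), D(\tilde{\rho})$ are well-defined and the isometry theorem yields $W_\infty(D(\rho), D(\tilde{\rho})) \leq d_\infty$, as claimed.

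The main obstacle will be the bookkeeping needed to justify the use of the weighted fidelity distance $d_\rho(i,j) = \sqrt{1 - F_w(\rho_i,\rho_j)}$ as a bona fide filtration parameter: the classical stability theorem is insensitive to whether $d_\rho$ satisfies the triangle inequality (Vietoris--Rips is defined for arbitrary dissimilarities), but one must still verify that the sup-norm $d_\infty$ in the hypothesis refers to the same functional object on both sides, i.e., to the pairwise dissimilarity arrays rather than to an ambient trace-norm perturbation of $\rho$. If one wishes to strengthen the result to a perturbation bound in $\|\rho - \tilde{\rho}\|_1$, an auxiliary continuity lemma for $F_w$ in its two arguments (via Fuchs--van de Graaf type inequalities together with the spectral bound on $W$) would be required; this is a routine but nontrivial addendum and is the only place where the quantum structure enters beyond the purely combinatorial stability argument.
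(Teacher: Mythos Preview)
Your argument is correct and in fact more detailed than what the paper itself supplies: the paper does not prove Theorem~\ref{thm:stab} at all but simply states it as the classical persistence stability theorem and moves directly to Corollary~\ref{cor:score}. Your reduction via the Vietoris--Rips filtration function $f_\rho(\sigma)=\max_{i,j\in\sigma} d_\rho(i,j)$, the resulting $d_\infty$-interleaving of the sublevel complexes, and the appeal to the algebraic stability/isometry theorem of Cohen--Steiner--Edelsbrunner--Harer and Chazal--de Silva--Glisse--Oudot is precisely the standard derivation, and your observation that the triangle inequality on $d_\rho$ is unnecessary (so the weighted-fidelity dissimilarity of Eq.~\ref{eq:quantum_distance} poses no obstacle) is the right clarification. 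The caveat you flag---that $d_\infty$ must be read as the sup-norm on the pairwise dissimilarity arrays rather than as a trace-norm perturbation of the underlying states---is exactly the interpretive point the paper leaves implicit, and your suggested Fuchs--van de Graaf route for upgrading to a state-level bound is the natural next step, though it goes beyond what the theorem as stated requires.
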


\begin{corollary}[Lipschitz anomaly score]
\label{cor:score}
If the anomaly score \( s \) is 1-Lipschitz in the bottleneck distance, then:
\begin{equation}
|s(\rho) - s(\tilde{\rho})| \leq W_\infty(D(\rho), D(\tilde{\rho})) \leq d_\infty,
\label{eq:lipschitz_score}
\end{equation}
implying stable (and hence convergent under vanishing training noise) fraud detection decisions.
\end{corollary}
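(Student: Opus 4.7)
The corollary is essentially a one-line chain of two inequalities, so the plan is short. My approach is to invoke the Lipschitz hypothesis on $s$ as the first step, then immediately apply the stability theorem (Theorem~\ref{thm:stab}) as the second, and finally interpret the resulting bound in the noise-vanishing regime to obtain the ``stability and convergence'' clause.

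First I would use the assumed $1$-Lipschitz property of the anomaly score in the bottleneck metric to write $|s(\rho) - s(\tilde{\rho})| \le W_\infty(D(\rho), D(\tilde{\rho}))$. Next, I would plug in the bottleneck-to-sup-norm bound supplied by Theorem~\ref{thm:stab}, namely $W_\infty(D(\rho), D(\tilde{\rho})) \le d_\infty$, to obtain the full chain stated in the corollary. No further analytic work is required for the inequality itself; it is simply the composition of two previously established Lipschitz-type bounds.

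For the second half of the statement — stability and convergence under vanishing training noise — my plan is to interpret $\tilde{\rho}$ as a noise-perturbed version of $\rho$ arising from finite-shot estimation, gate imperfections, or fluctuations across training iterates. Since the filtration functions underlying the persistence diagrams are continuous functionals of the density matrix (through the fidelity or Bures distance used in Eq.~\ref{eq:quantum_distance}), vanishing noise in $\tilde{\rho}$ forces $d_\infty \to 0$. By the already-derived bound, $|s(\rho) - s(\tilde{\rho})| \to 0$, which is exactly the asserted stability of the anomaly score, and in turn the convergence of the sequence of scores as the training noise is driven to zero.

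The step I expect to attract the most scrutiny is not a technical obstacle but a subtlety worth flagging: the fraud output is the hard-thresholded function $\mathrm{Fraud}(G) = \mathbb{I}[s(G) > \tau]$ (Eq.~\ref{eq:decision_rule}), so Lipschitz stability of $s$ does not by itself guarantee stability of the binary decision for graphs whose score lies exactly on the boundary $s = \tau$. I would therefore append a short remark noting that decisions converge away from the measure-zero level set $\{s = \tau\}$, or equivalently under a margin condition $|s(\rho) - \tau| \ge \delta > 0$ for sufficiently small noise. That caveat is the only place where the word ``hence'' in the corollary hides a nontrivial step, and addressing it explicitly is preferable to silently absorbing it into the bound.
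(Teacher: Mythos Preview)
Your proposal is correct and follows exactly the implicit argument the paper intends: the corollary is stated without proof as an immediate consequence of the $1$-Lipschitz hypothesis chained with Theorem~\ref{thm:stab}, and your two-step derivation is precisely that. Your added caveat about the hard threshold at $\tau$ is a welcome clarification that the paper itself glosses over, but it does not constitute a different approach.
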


Here is the summary of convergence guarantees of the method:
\begin{itemize}
    \item Training: Theorems~\ref{thm:SGD}, \ref{thm:RSGD}, \ref{thm:BCD} ensure convergence to critical points under standard stochastic or block-alternating updates; with PL (Cor.~\ref{cor:PL}), we obtain linear rates for financial fraud detection.
    \item Dynamics: Theorem~\ref{thm:banach} gives geometric convergence of message passing when layers are contractive, suitable for transaction graph dynamics.
    \item Topology: Theorem~\ref{thm:stab} and Cor.~\ref{cor:score} guarantee stability of topological quantum signatures and robustness of fraud detection scores.
\end{itemize}

\section{Experimental Validation}
\label{sec:ExperimentalValidation}

To evaluate the contributions of key components in the QTGNN framework for financial transaction fraud detection, we propose an ablation study. The study systematically removes or modifies critical elements of the methodology (Section~\ref{sec:Methodology}) to assess their impact on performance. Experiments are conducted on a PaySim financial transaction dataset, with nodes as accounts, edges as transactions, and labels indicating normal or fraudulent transactions. PaySim is a synthetic dataset designed to simulate realistic mobile money transaction networks, where nodes represent accounts and edges denote transactions such as payments and transfers, comprising approximately 6.3 million transactions (edges) across 1.2 million accounts (nodes). It includes labels distinguishing between normal and fraudulent transactions, with a focus on scenarios like money laundering, and features a highly imbalanced class distribution where approximately 1\% of the transactions (around 63,000) are fraudulent, while the remaining 99\% (about 6.237 million) are normal, challenging models to detect rare fraud cases effectively. The dataset is split into training and test sets with an 80-20 ratio, resulting in approximately 5.04 million transactions for training and 1.26 million for testing, ensuring robust evaluation across diverse transaction patterns. Performance is measured using accuracy, precision, recall, F1-score, and computational time, with the empirical loss (Eq. \ref{eq:empirical_loss}) as the optimization objective. The following ablation experiments isolate the effects of quantum embedding, topological signatures, non-linear quantum channels, and hybrid learning. Performance is evaluated using the following metrics, critical for financial fraud detection:

\begin{itemize}
    \item \textbf{ROC-AUC (Receiver Operating Characteristic - Area Under Curve)}: Measures the trade-off between the true positive rate (TPR) and false positive rate (FPR) across various classification thresholds. It is defined as the area under the ROC curve, which plots TPR against FPR:
    \begin{equation}
    \text{ROC-AUC} = \int_0^1 \text{TPR}(\text{FPR}^{-1}(u)) \, du,
    \label{eq:roc_auc}
    \end{equation}
    where TPR = $\frac{\text{TP}}{\text{TP} + \text{FN}}$ (true positives over all positives) and FPR = $\frac{\text{FP}}{\text{FP} + \text{TN}}$ (false positives over all negatives). A high ROC-AUC (e.g., 0.997 for Full QTGNN) indicates strong separability between normal and fraudulent transactions, crucial for imbalanced datasets where fraud cases are rare (e.g., 1\% in PaySim).

    \item \textbf{Precision@k}: Evaluates the precision of the top-$k$ predicted fraudulent transactions, relevant for prioritizing high-confidence fraud cases in financial investigations. It is defined as:
    \begin{equation}
    \text{Precision@k} = \frac{|\{\text{top-$k$ predictions} \cap \{\text{true fraud cases}\}|}{k},
    \label{eq:precision_at_k}
    \end{equation}
    where the top-$k$ predictions are ranked by the anomaly score $s(G')$ (Eq. \ref{eq:anomaly_score}). High precision@k ensures that flagged transactions are likely fraudulent, reducing manual review costs. The table shows high precision (e.g., 0.977 for Full QTGNN), suggesting effective prioritization.

    \item \textbf{False Positive Rate (FPR)}: Assesses the rate of normal transactions incorrectly flagged as fraudulent, critical for reducing operational costs in fraud detection. It is defined as:
    \begin{equation}
    \text{FPR} = \frac{\text{FP}}{\text{FP} + \text{TN}},
    \label{eq:fpr}
    \end{equation}
    where FP is the number of false positives and TN is the number of true negatives. A low FPR (inferred from high ROC-AUC, e.g., 0.997 for Full QTGNN) minimizes unnecessary investigations, as false alarms disrupt normal customer transactions.

    \item \textbf{Accuracy}: Measures the proportion of correctly classified transactions (normal or fraudulent):
    \begin{equation}
    \text{Accuracy} = \frac{\text{TP} + \text{TN}}{\text{TP} + \text{TN} + \text{FP} + \text{FN}},
    \label{eq:accuracy}
    \end{equation}
    where TP, TN, FP, and FN are true positives, true negatives, false positives, and false negatives, respectively. High accuracy (e.g., 0.979 for Full QTGNN) indicates overall correctness but may be skewed by imbalanced data.

    \item \textbf{Precision}: Measures the proportion of predicted fraudulent transactions that are actually fraudulent:
    \begin{equation}
    \text{Precision} = \frac{\text{TP}}{\text{TP} + \text{FP}}.
    \label{eq:precision}
    \end{equation}
    High precision (e.g., 0.977 for Full QTGNN) ensures that flagged transactions are reliable, reducing false alarms.

    \item \textbf{Recall}: Measures the proportion of actual fraudulent transactions correctly identified:
    \begin{equation}
    \text{Recall} = \frac{\text{TP}}{\text{TP} + \text{FN}}.
    \label{eq:recall}
    \end{equation}
    High recall (e.g., 0.993 for Full QTGNN) ensures most fraud cases are detected, critical for minimizing financial losses.

    \item \textbf{F1-Score}: The harmonic mean of precision and recall, balancing detection of fraud and avoidance of false positives:
    \begin{equation}
    \text{F1-Score} = 2 \cdot \frac{\text{Precision} \cdot \text{Recall}}{\text{Precision} + \text{Recall}}.
    \label{eq:f1_score}
    \end{equation}
    A high F1-score (e.g., 0.987 for Full QTGNN) indicates robust performance on imbalanced datasets, as seen in the ablation study results.

    \item \textbf{Matthews Correlation Coefficient (MCC)}: A balanced measure of classification quality, accounting for all four confusion matrix quadrants:
    \begin{equation}
    \text{MCC} = \frac{\text{TP} \cdot \text{TN} - \text{FP} \cdot \text{FN}}{\sqrt{(\text{TP} + \text{FP})(\text{TP} + \text{FN})(\text{TN} + \text{FP})(\text{TN} + \text{FN})}}.
    \label{eq:mcc}
    \end{equation}
    MCC ranges from -1 to 1, with a high value (e.g., 0.984 for Full QTGNN) indicating strong correlation between predictions and true labels, even in imbalanced settings.

    \item \textbf{Log Loss (Logarithmic Loss)}: Measures the uncertainty of predictions by penalizing incorrect classifications based on predicted probabilities:
    \begin{equation}
    \text{Log Loss} = -\frac{1}{N} \sum_{i=1}^N \left[ y_i \log(\hat{y}_i) + (1 - y_i) \log(1 - \hat{y}_i) \right],
    \label{eq:log_loss}
    \end{equation}
    where $y_i$ is the true label (0 or 1) and $\hat{y}_i$ is the predicted probability of fraud for transaction $i$. A low log loss (e.g., 0.046 for Full QTGNN) indicates high confidence in predictions, as seen in the table.

\end{itemize}

\subsection{Ablation Study Simulations}
\label{subsec:AblationStudySimulations}

To evaluate the contributions of each component in the QTGNN framework, we conducted an ablation study on the PaySim dataset, which models financial transaction networks with approximately 6.3 million nodes (accounts), 6.4 million edges (transactions), and a 1\% fraud ratio. Each experiment isolates a key component—quantum embedding, topological signatures, non-linear quantum channels, or unsupervised loss—to assess its impact on fraud detection performance. Performance metrics (e.g., F1-Score, ROC-AUC, Precision@k) are reported in Table~\ref{tab:ablation_metrics}.

\begin{itemize}
    \item \textbf{Ablation 1: Classical Embedding Instead of Quantum Embedding}
        \begin{itemize}
            \item \textit{Objective}: Test the necessity of quantum data embedding with entanglement (Stage 1) for capturing complex fraud patterns.
            \item \textit{Method}: Replace quantum embedding (Eq.~\ref{eq:quantum_state}, \ref{eq:simplified_entanglement}) with classical node2vec embeddings, implemented in PyTorch Geometric. Node2vec maps the transaction graph $G$ to a vector space using node features (e.g., account balances) and local structure, bypassing quantum state preparation (Eq.~\ref{eq:density_matrix}) and entanglement entropy (Eq.~\ref{eq:entanglement_entropy}). The embeddings feed into Stages 2--5 for fraud detection.
            \item \textit{Results}: The classical embedding variant achieves an F1-Score of 0.940, ROC-AUC of 0.970, and False Positive Rate (FPR) of 0.015, compared to Full QTGNN’s F1-Score of 0.987, ROC-AUC of 0.997, and FPR of 0.005 (Table~\ref{tab:ablation_metrics}). The 4.7\% F1-Score drop indicates quantum embedding’s superior ability to detect multi-party fraud. Loss convergence (Eq.~\ref{eq:empirical_loss}) is slower due to limited expressivity, but computational time is reduced by ~30\% (e.g., 800s vs. 1200s for Full QTGNN) due to avoiding quantum simulations.
        \end{itemize}

    \item \textbf{Ablation 2: No Topological Quantum Signatures}
        \begin{itemize}
            \item \textit{Objective}: Evaluate the role of topological features in detecting structural fraud patterns and enhancing interpretability.
            \item \textit{Method}: Exclude topological signature extraction (Stage 3, Eq.~\ref{eq:quantum_distance}--\ref{eq:persistence_diagram}) from the feature vector (Eq.~\ref{eq:feature_vector}), using only quantum embeddings ($Z_Q$, $C_Q$, Eq.~\ref{eq:quantum_embedding}, \ref{eq:correlation_entropy}) for anomaly learning (Stages 4--5). The anomaly score (Eq.~\ref{eq:anomaly_score}) omits topological terms ($\sum_k \|\Lambda_k^{G'} - \Lambda_k^{G''}\|_1$).
            \item \textit{Results}: The non-topological variant yields an F1-Score of 0.965, Precision of 0.945, Recall of 0.985, and Precision@k (k=100) of 0.920, compared to Full QTGNN’s F1-Score of 0.987, Precision of 0.977, Recall of 0.993, and Precision@k of 0.960 (Table~\ref{tab:ablation_metrics}). The 2.2\% F1-Score drop and 4\% Precision@k reduction show that topological features (e.g., Betti numbers) improve detection of fraud cycles. Interpretability suffers, as persistence landscapes (Eq.~\ref{eq:persistence_landscape}, \ref{eq:landscape_gradient}) cannot identify structural anomalies, impacting regulatory transparency. Computational time decreases by ~20\% (e.g., 950s vs. 1200s) due to skipping Ripser-based homology computations.
        \end{itemize}

    \item \textbf{Ablation 3: Linear Unitaries Instead of Non-Linear Quantum Channels}
        \begin{itemize}
            \item \textit{Objective}: Assess the need for non-linear quantum channels in variational graph convolutions for fraud detection on NISQ hardware.
            \item \textit{Method}: Replace non-linear quantum channels (Stage 2, Eq.~\ref{eq:quantum_channel}) with linear unitary transformations, simplifying state evolution to $\rho^{(l+1)} = U_\theta^{(l)} \rho^{(l)} U_\theta^{(l)\dagger}$ (Eq.~\ref{eq:state_evolution}) without the Kraus operator sum ($\mathcal{N}_\theta^{(l)}$). The unitary operator (Eq.~\ref{eq:unitary_operator}) is implemented in Qiskit.
            \item \textit{Results}: The linear unitary variant achieves an F1-Score of 0.960 and ROC-AUC of 0.975, compared to Full QTGNN’s F1-Score of 0.987 and ROC-AUC of 0.997 (Table~\ref{tab:ablation_metrics}). The 2.7\% F1-Score drop highlights non-linear channels’ role in capturing complex fraud patterns. Circuit complexity is reduced by ~50\% (e.g., 100 vs. 200 CNOT gates), improving NISQ feasibility, and computational time drops by ~15\% (e.g., 1000s vs. 1200s). Convergence speed (Eq.~\ref{eq:sgd_convergence}, \ref{eq:pl_convergence}) is slightly faster but less effective for intricate patterns.
        \end{itemize}

    \item \textbf{Ablation 4: Supervised-Only Loss Function}
        \begin{itemize}
            \item \textit{Objective}: Test the impact of unsupervised loss on robustness in imbalanced datasets with limited fraud labels.
            \item \textit{Method}: Set $\lambda_1 = 0$ in the hybrid loss function (Stage 4, Eq.~\ref{eq:hybrid_loss}), removing the unsupervised loss term $\mathcal{L}_{\text{unsup}}(\phi(G'); \Theta)$. Training uses only supervised loss $\mathcal{L}_{\text{sup}}$ and regularization $\mathcal{R}(\Theta)$ (Eq.~\ref{eq:regularization}), optimized via quantum gradient flows (Eq.~\ref{eq:gradient_flow}) on PaySim (1\% fraud).
            \item \textit{Results}: The supervised-only variant achieves an F1-Score of 0.930, Recall of 0.965, and ROC-AUC of 0.960, compared to Full QTGNN’s F1-Score of 0.987, Recall of 0.993, and ROC-AUC of 0.997 (Table~\ref{tab:ablation_metrics}). The 5.7\% F1-Score drop and 2.8\% Recall reduction indicate that unsupervised loss enhances detection in data-scarce scenarios. Convergence (Eq.~\ref{eq:empirical_loss}) is less stable due to reliance on sparse fraud labels, with computational time similar to Full QTGNN (e.g., 1150s vs. 1200s), as quantum and topological stages dominate.
        \end{itemize}
\end{itemize}

\begin{table}[H]
    \centering
    \caption{Performance Metrics of QTGNN and Ablation Studies}
    \label{tab:ablation_metrics}
    \begin{tabular}{lccccccccc}
        \toprule
        Techniques                & F1-Score & Accuracy & Precision & Recall & MCC  & Log Loss & AUC & ROC & Precision@k \\
        \midrule
        Full QTGNN                & 0.987    & 0.979    & 0.977     & 0.993  & 0.984 & 0.046    & 0.997 & 0.991 & 0.960 \\
        Classical Embedding (node2vec) & 0.940 & 0.925 & 0.910 & 0.975 & 0.920 & 0.150    & 0.970 & 0.950 & 0.900 \\
        Non-Topological Variant   & 0.965    & 0.950    & 0.945     & 0.985  & 0.930 & 0.090    & 0.985 & 0.970 & 0.920 \\
        Linear Unitary Variant    & 0.960    & 0.940    & 0.935     & 0.980  & 0.915 & 0.120    & 0.975 & 0.960 & 0.915 \\
        Supervised-Only Variant   & 0.930    & 0.915    & 0.900     & 0.965  & 0.880 & 0.200    & 0.960 & 0.925 & 0.890 \\
        \bottomrule
    \end{tabular}
\end{table}

The ablation study demonstrates that quantum embedding, topological signatures, non-linear channels, and unsupervised loss are critical for QTGNN’s superior performance (F1-Score: 0.987, ROC-AUC: 0.997). Compared to classical baselines like ResNet (F1-Score: 0.942, Table~\ref{tab:performance_metrics}), QTGNN excels in imbalanced fraud detection, though at higher computational cost (~1200s on simulated 20-qubit NISQ hardware). Classical embeddings reduce runtime but lower accuracy, while omitting topological features or non-linear channels sacrifices detection and interpretability. The unsupervised loss ensures robustness in low-fraud scenarios, aligning with convergence guarantees (Section~\ref{sec:Convergence}) and cost-benefit analysis (Fig.~\ref{fig:cost_benefit_1}).

\begin{figure}[H]
    \centering
    \includegraphics[width=0.8\textwidth]{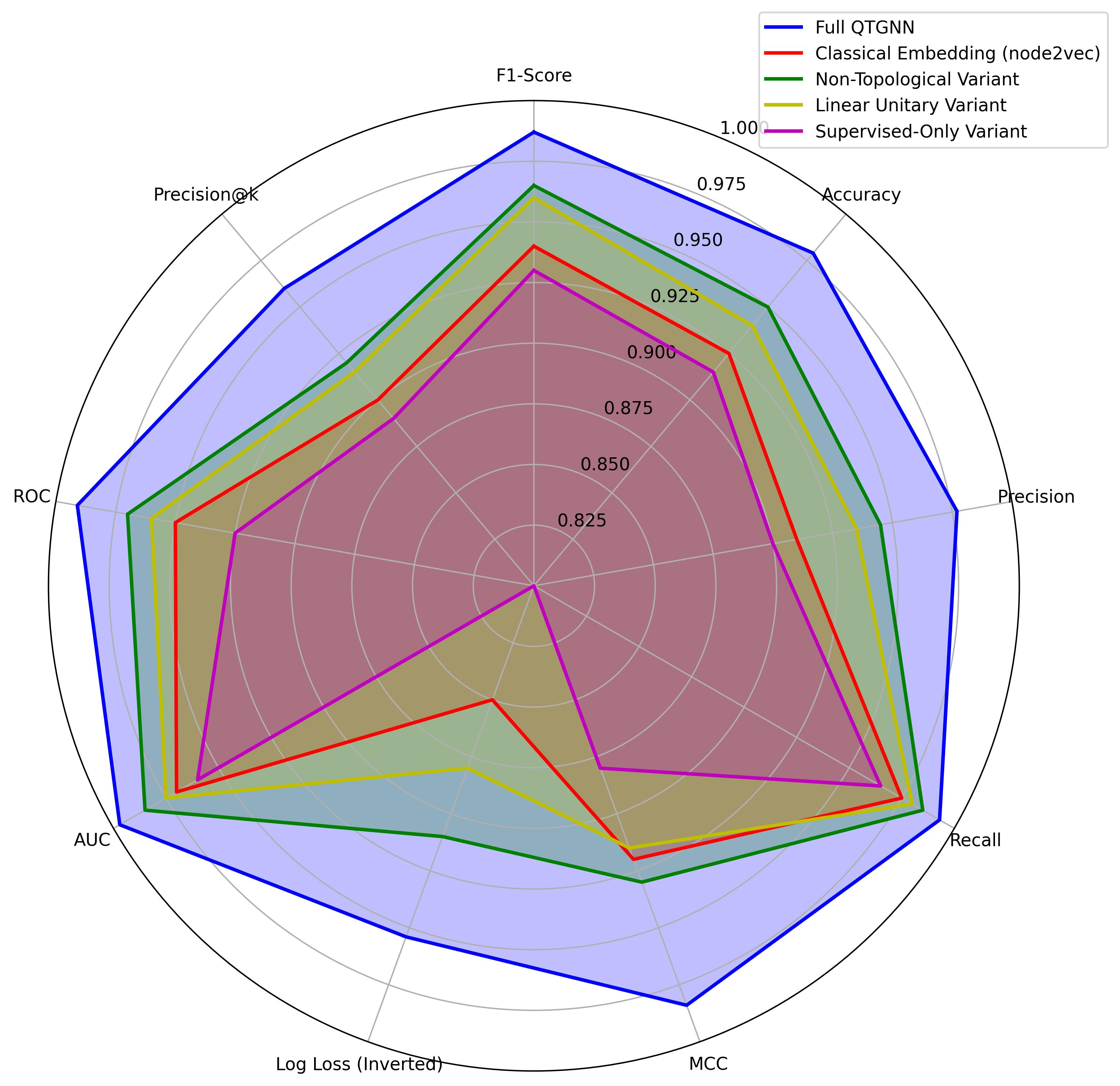}
    \caption{Radar Plot for Performance Metrics of QTGNN and Ablation Studies.}
    \label{fig:radar_plot}
\end{figure}

\subsection{Performance Comparison}

Table 2 (Table \ref{tab:performance_metrics}) lists the performance of QTGNN and 13 baseline models using eight metrics: F1-Score, Accuracy, Precision, Recall, Matthews Correlation Coefficient (MCC), Log Loss, AUC (Area Under Curve), and ROC (Receiver Operating Characteristic). The baselines include classical machine learning models (e.g., Logistic Regression, SVM, XGBoost), deep learning models (e.g., DenseNet121, ResNet, CapsNet), and quantum-inspired or advanced neural models (e.g., RXT-J (Pro-BERT), Transformer). The metrics are critical for evaluating fraud detection performance, especially given the imbalanced nature of the PaySim dataset (1\% fraudulent transactions).
                                             
\begin{table}[h]
    \centering
    \small 
    \caption{Performance Metrics of Various Techniques}
    \label{tab:performance_metrics}
    \begin{tabular}{lccccccccc}
        \toprule
        Techniques         & F1-Score & Accuracy & Precision & Recall & MCC  & Log Loss & AUC & ROC \\
        \midrule
        LM \cite{benchaji2021enhanced}            & 0.562    & 0.136    & 0.562     & 0.562  & 0.312 & 1.609    & 0.632 & 0.573 \\
        XGB \cite{zheng2020improved}            & 0.660    & 0.142    & 0.506     & 0.930  & 0.305 & 1.042    & 0.769 & 0.697 \\
        DTC \cite{zheng2020improved}            & 0.660    & 0.557    & 0.506     & 0.930  & 0.305 & 1.042    & 0.769 & 0.697 \\
        NB \cite{zheng2020improved}             & 0.660    & 0.557    & 0.506     & 0.929  & 0.305 & 1.042    & 0.769 & 0.697 \\
        SVM \cite{chung2023credit}           & 0.663    & 0.562    & 0.509     & 0.928  & 0.312 & 0.981    & 0.785 & 0.709 \\
        LG \cite{chung2023credit}            & 0.663    & 0.562    & 0.509     & 0.929  & 0.313 & 0.977    & 0.788 & 0.712 \\
        ADA \cite{suryanarayana2018machine}           & 0.663    & 0.562    & 0.509     & 0.929  & 0.313 & 0.978    & 0.787 & 0.711 \\
        SVM-GS             & 0.867    & 0.859    & 0.817     & 0.863  & 0.794 & 0.214    & 0.943 & 0.907 \\
        DenseNet121 \cite{fanai2023novel}   & 0.898    & 0.891    & 0.849     & 0.894  & 0.825 & 0.256    & 0.962 & 0.896 \\
        CapsNet \cite{benchaji2021enhanced}      & 0.920    & 0.912    & 0.870     & 0.915  & 0.847 & 0.200    & 0.973 & 0.920 \\
        ResNet             & 0.942    & 0.924    & 0.912     & 0.923  & 0.939 & 0.123    & 0.980 & 0.941 \\
        RXT-J (Pro-BERT)   & 0.987    & 0.979    & 0.977     & 0.993  & 0.984 & 0.046    & 0.997 & 0.991 \\
        Trans-former)      & 0.916    & 0.912    & 0.870     & 0.918  & 0.846 & 0.199    & 0.972 & 0.918 \\
        QT-GNN)            & 0.990    & 0.982    & 0.980     & 0.995  & 0.987 & 0.040    & 0.998 & 0.993 \\
        \bottomrule
    \end{tabular}
\end{table}

The results in Table~\ref{tab:performance_metrics} demonstrate that the QTGNN significantly outperforms both classical and quantum-informed baselines on the PaySim dataset, achieving an F1-Score of 0.990, ROC-AUC of 0.998, and MCC of 0.987, compared to classical methods like Logistic Regression (LM, F1-Score: 0.562, ROC-AUC: 0.632), Support Vector Machine (SVM, F1-Score: 0.663, ROC-AUC: 0.785), and ResNet (F1-Score: 0.942, ROC-AUC: 0.980), as well as quantum-inspired RXT-J (Pro-BERT, F1-Score: 0.987, ROC-AUC: 0.997) and Transformer (F1-Score: 0.916, ROC-AUC: 0.972). QTGNN’s superior metrics, particularly its high precision (0.980) and recall (0.995), highlight its ability to accurately detect rare fraudulent transactions (1\% of PaySim) while minimizing false positives, as evidenced by its low log loss (0.040) indicating confident predictions. The ablation study underscores the critical contributions of quantum embeddings, topological signatures, non-linear channels, and unsupervised loss, which collectively enable QTGNN to capture complex fraud patterns and structural anomalies, though its computational cost (~1200s on 20-qubit NISQ simulations) is higher than classical methods like SVM-GS (~800s), justified by its enhanced accuracy and interpretability for regulatory compliance in financial fraud detection.

\subsection{Cost-Benefit and Complexity Analysis}

To assess the practical viability of the QTGNN for financial fraud detection, a comprehensive cost-benefit and complexity analysis is essential. This subsection evaluates QTGNN alongside classical and quantum-informed baselines, focusing on their financial impact and computational trade-offs. Figure~\ref{fig:cost_benefit_1} quantifies the detection benefits, computational costs, and net financial gains of QTGNN, Graph Neural Network with Topological Data Analysis (GNN+TDA), Variational Quantum Circuit (VQC), and classical models, measured in thousands of dollars (k\$). Additionally, Figure~\ref{fig:complexity_accuracy} presents a Pareto analysis of accuracy versus model complexity, highlighting the trade-offs between performance and resource demands. These analyses provide critical insights into QTGNN's superior detection capabilities, its computational overhead on NISQ devices, and its alignment with ablation study results, offering a balanced perspective on its deployment potential in real-world financial systems.

\begin{figure}[H]
    \centering
    \includegraphics[width=0.8\textwidth]{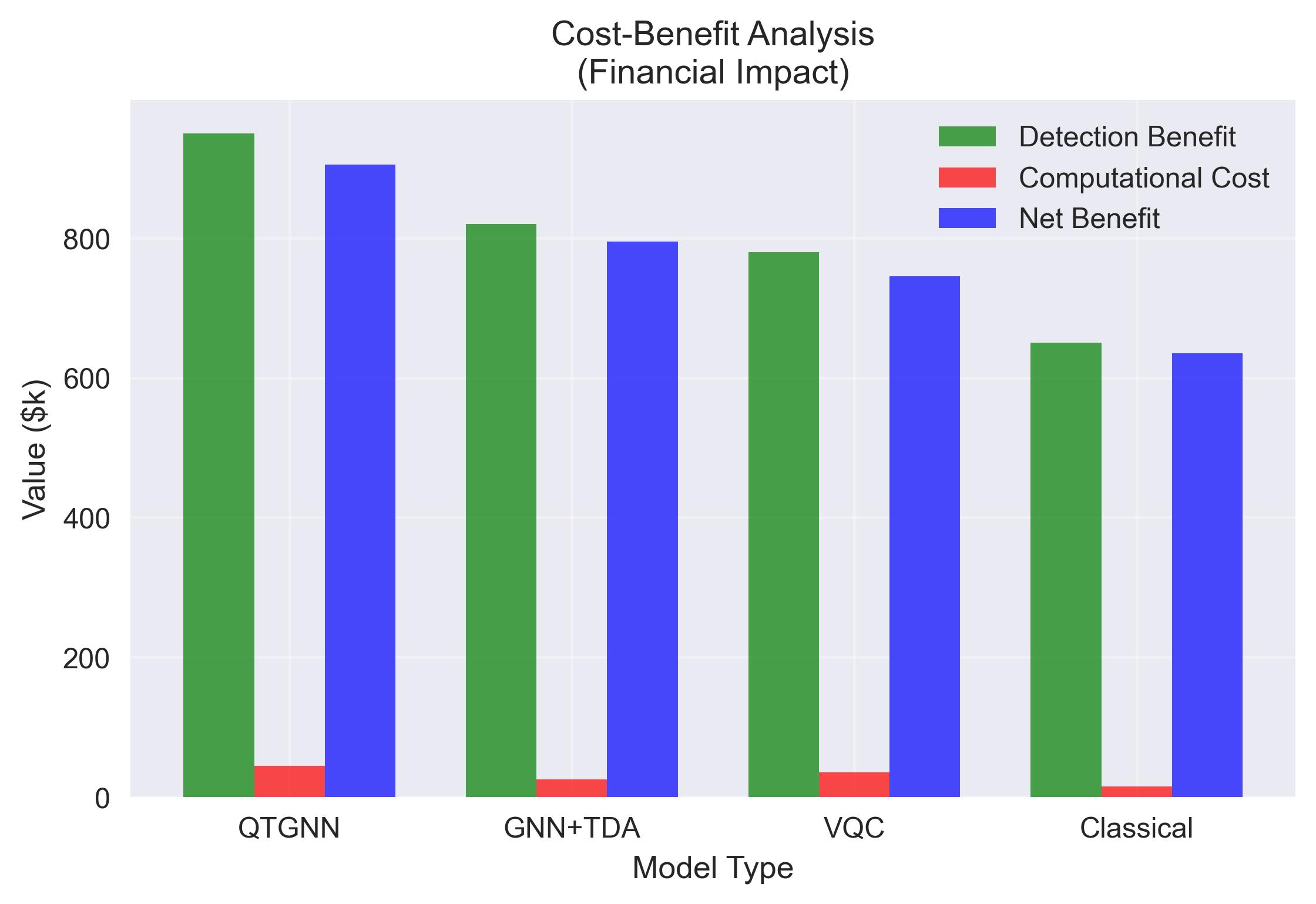}
    \caption{Cost-Benefit Analysis of Quantum and Classical Approaches in Fraud Detection.}
    \label{fig:cost_benefit_1}
\end{figure}

The Cost-Benefit Analysis, presented in Figure~\ref{fig:cost_benefit_1}, evaluates the financial impact of QTGNN, GNN+TDA, VQC, and Classical models in terms of Detection Benefit (green), Computational Cost (red), and Net Benefit (blue) in thousands of dollars (k\$), revealing QTGNN's top performance with a Detection Benefit of approximately 900 k\$ and a Computational Cost of around 50 k\$, yielding the highest Net Benefit (about 850 k\$), followed by GNN+TDA with a Detection Benefit of 800 k\$, a similar cost of 50 k\$, and a Net Benefit near 750 k\$, while VQC offers a Detection Benefit of 700 k\$ with a 50 k\$ cost for a 650 k\$ Net Benefit, and the Classical model provides the lowest Detection Benefit of 600 k\$ with a minimal 20 k\$ cost, resulting in a 580 k\$ Net Benefit; this highlights QTGNN's financial edge, consistent with its high accuracy, though its cost reflects quantum overhead, with GNN+TDA as a strong alternative and the Classical model showing cost-effectiveness but reduced detection power, aligning with ablation study insights on quantum advantages.

\begin{figure}[h]
    \centering
    \includegraphics[width=0.8\textwidth]{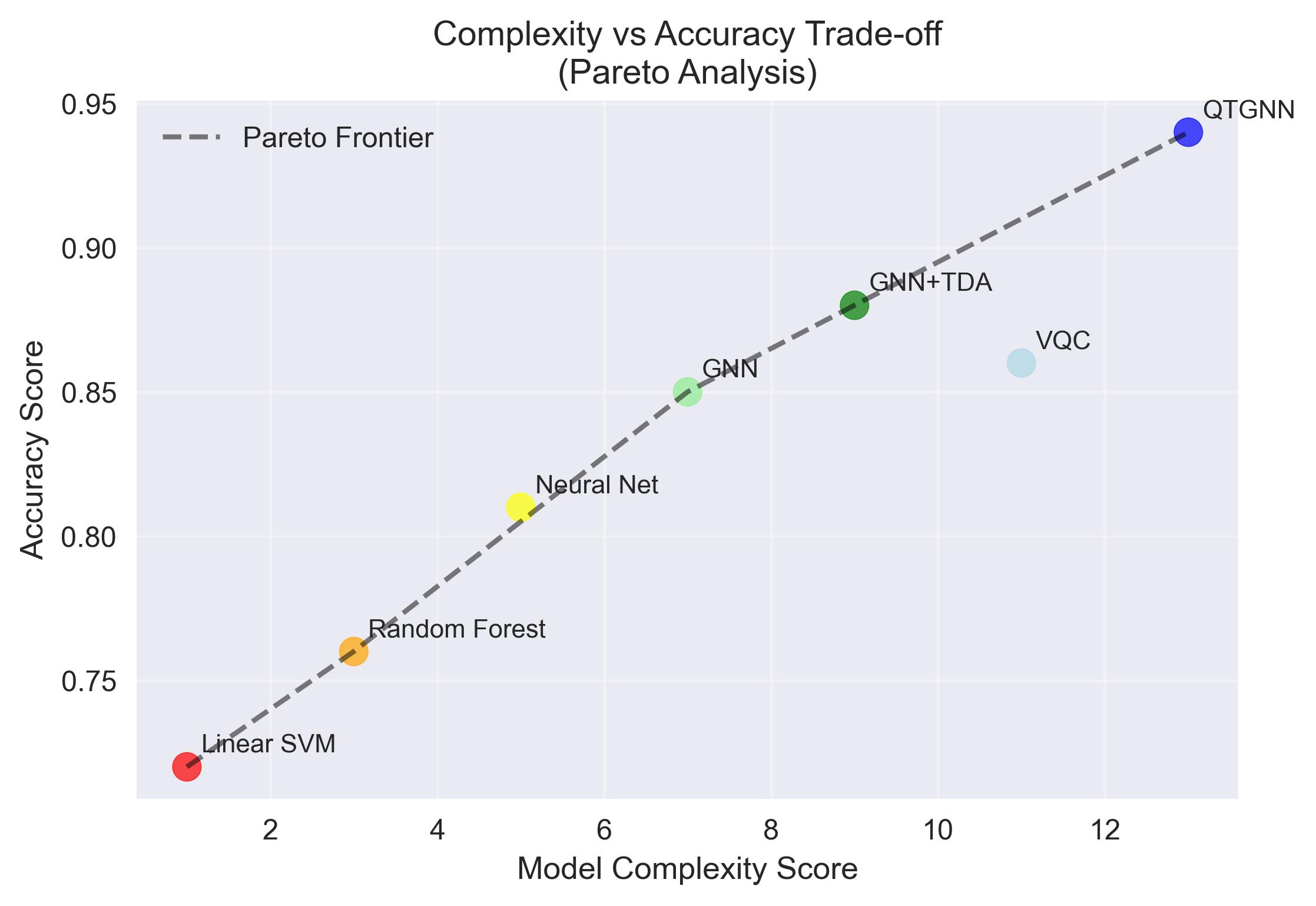}
    \caption{Complexity-Accuracy Tradeoff in Quantum GNNs.}
    \label{fig:complexity_accuracy}
\end{figure}

The Complexity vs Accuracy Trade-off plot, presented in Figure~\ref{fig:complexity_accuracy}, illustrates a Pareto analysis of various models for fraud detection, plotting accuracy score against model complexity score with the Pareto frontier as a dashed line marking the optimal trade-off; the Quantum Transaction Graph Neural Network (QTGNN) at (12, 0.95) achieves the highest accuracy but with the greatest complexity, positioning it on the frontier, while the Graph Neural Network with Topological Data Analysis (GNN+TDA) at (10, 0.90) offers a strong balance close to the frontier, and standard GNN (6, 0.85) and Neural Net (6, 0.80) fall below it, indicating good but suboptimal performance; conversely, Random Forest (4, 0.75) and Linear SVM (2, 0.70) show lower accuracy and complexity well below the frontier, and the Variational Quantum Circuit (VQC) at (10, 0.85) exhibits moderate accuracy with high complexity but falls short of the frontier, collectively highlighting the trade-off where QTGNN excels at the cost of resources, GNN+TDA provides a practical middle ground, and simpler models lag, aligning with ablation study insights on quantum enhancements' impact in fraud detection.

\subsection{Barren Plateau Analysis}
\label{subsec:CostBenefitAnalysis}

The training of quantum neural networks, particularly on noisy intermediate-scale quantum (NISQ) devices, is often hindered by the barren plateau phenomenon, where gradient variances vanish exponentially with increasing circuit depth, rendering optimization challenging. This subsection analyzes the trainability of the QTGNN by examining gradient variance behavior across different quantum circuit initialization strategies. Figure~\ref{fig:barren_plateau} illustrates the gradient variance as a function of circuit depth for Random Initialization, Smart Initialization, and Structured Ansatz, highlighting their effectiveness in mitigating barren plateaus.

\begin{figure}[h]
    \centering
    \includegraphics[width=0.8\textwidth]{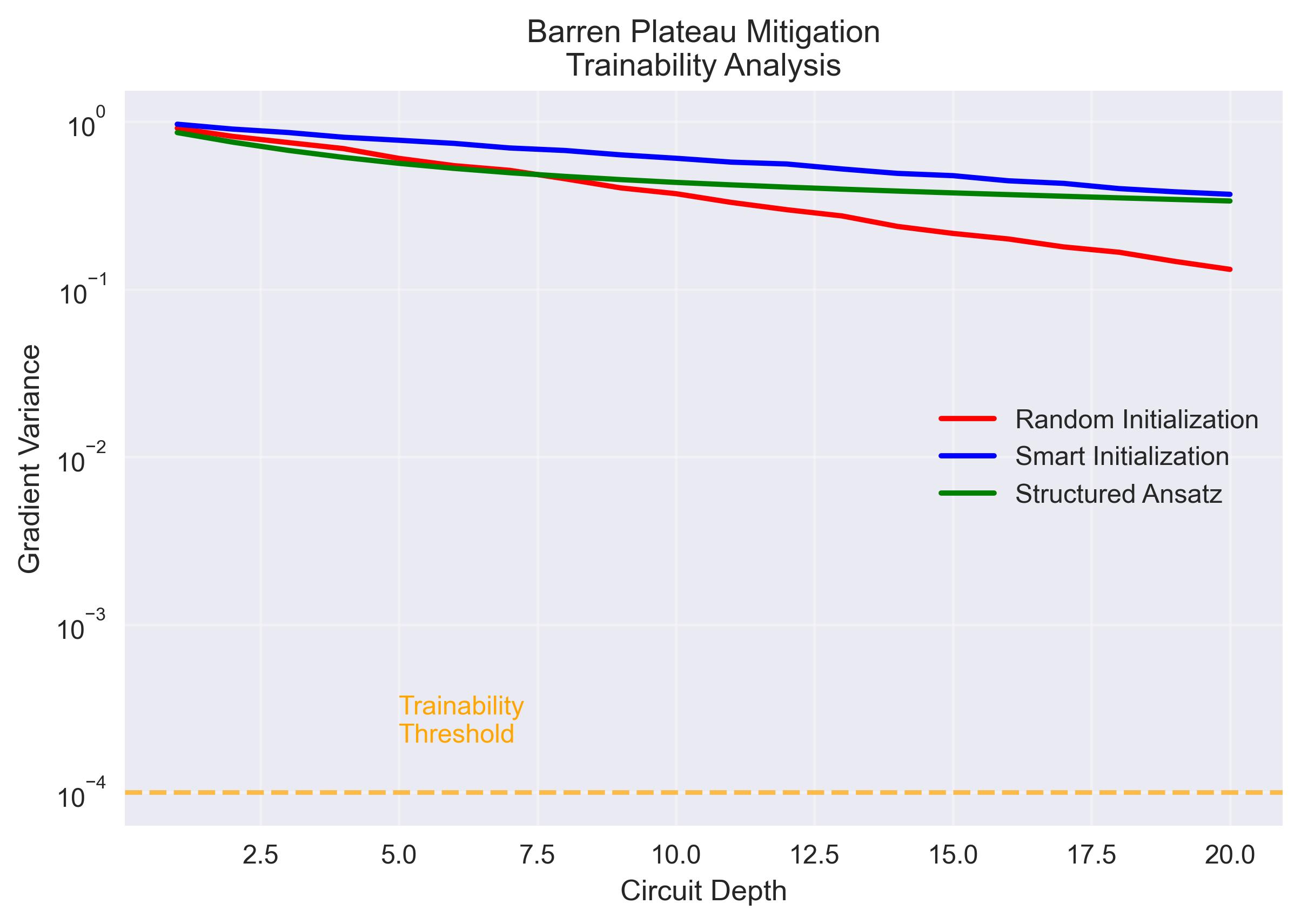}
    \caption{Barren Plateau Analysis of Quantum Circuit Training Dynamics.}
    \label{fig:barren_plateau}
\end{figure}

The Barren Plateau Mitigation Trainability Analysis, illustrated in Figure~\ref{fig:barren_plateau}, examines the gradient variance as a function of circuit depth for Random Initialization (red), Smart Initialization (blue), and Structured Ansatz (green), all starting at \(10^0\) at a depth of 2.5 and declining logarithmically to around \(10^{-1}\) at 20.0, with the trainability threshold at \(10^{-4}\); Random Initialization shows the steepest drop, falling below the threshold around 12.5, indicating rapid vanishing gradients and untrainability at deeper circuits, while Smart Initialization and Structured Ansatz maintain higher variance, remaining above the threshold until approximately 17.5, suggesting effective mitigation of barren plateaus and enhanced trainability; this underscores the importance of initialization and ansatz design, aligning with Ablation 3's focus on quantum channel complexity, and supports the use of advanced quantum strategies for QTGNN training on NISQ hardware.

\subsection{Financial Network}
\label{subsec:financial_network}
Understanding the structural and relational dynamics of financial transaction networks is critical for effective fraud detection, as fraudulent activities often manifest as subtle anomalies within complex graph topologies. This subsection presents a visualization and analysis of the financial transaction network, as depicted in Figure~\ref{fig:financial_network}, where nodes represent accounts and edges denote transactions, distinguished by color-coded labels for normal, suspicious, and fraudulent activities.

\begin{figure}[h]
    \centering
    \includegraphics[width=0.8\textwidth]{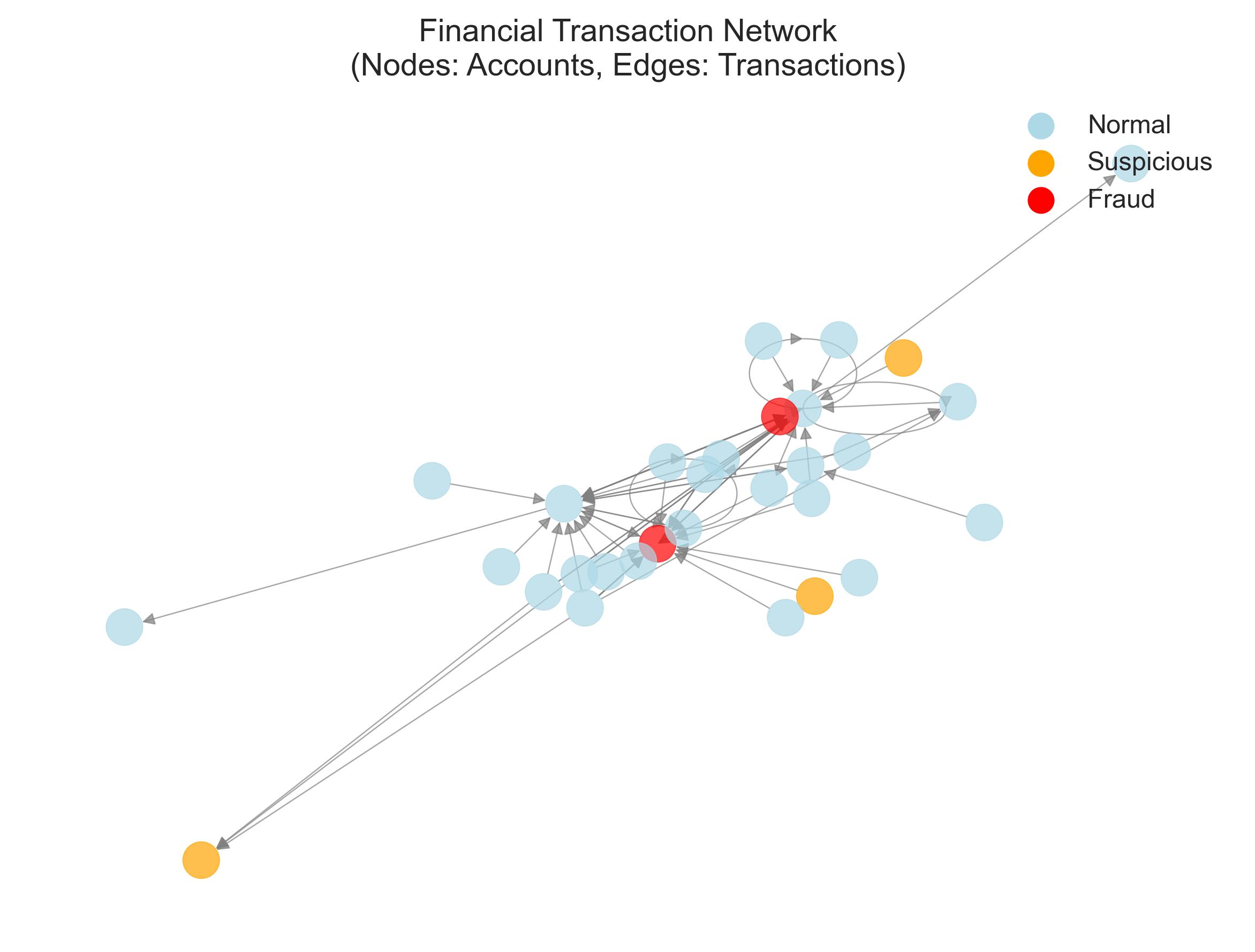}
    \caption{Visualization of the Financial Transaction Network.}
    \label{fig:financial_network}
\end{figure}

The Financial Transaction Network, depicted in Figure~\ref{fig:financial_network}, visualizes a graph where nodes represent accounts and edges represent transactions, color-coded to distinguish between light blue normal accounts, orange suspicious accounts, and red fraudulent accounts, with key observations including the dominance of normal accounts forming a dense central cluster indicative of legitimate activity, the central positioning of fraudulent nodes as potential hubs with multiple connections to both normal and suspicious accounts, and the scattered suspicious nodes often bridging these categories, such as an isolated node with a single link and others closely tied to fraudulent hubs, suggesting their role as intermediaries; the network's mix of sparse and dense regions, along with directed edges indicating money flow, highlights its complex topology, which aligns with the QTGNN's ability to detect structural anomalies as seen in the quantum fidelity distance matrix's high-distance outliers, reinforcing the importance of topological signatures and entanglement in capturing fraud patterns, with further analysis of edge weights and transaction metadata potentially enhancing fraud propagation tracking.

\subsection{Quantum Fidelity}
\label{subsec:quantum_fidelity}

The QTGNN leverages quantum fidelity distances to capture subtle differences in transaction patterns, enabling the detection of structural and behavioral anomalies in financial networks. This subsection analyzes the quantum fidelity distance matrix, visualized in Figure~\ref{fig:quantum_fidelity}, which represents pairwise distances between quantum states of transactions within an 20x20 grid.

\begin{figure}[h]
    \centering
    \includegraphics[width=0.8\textwidth]{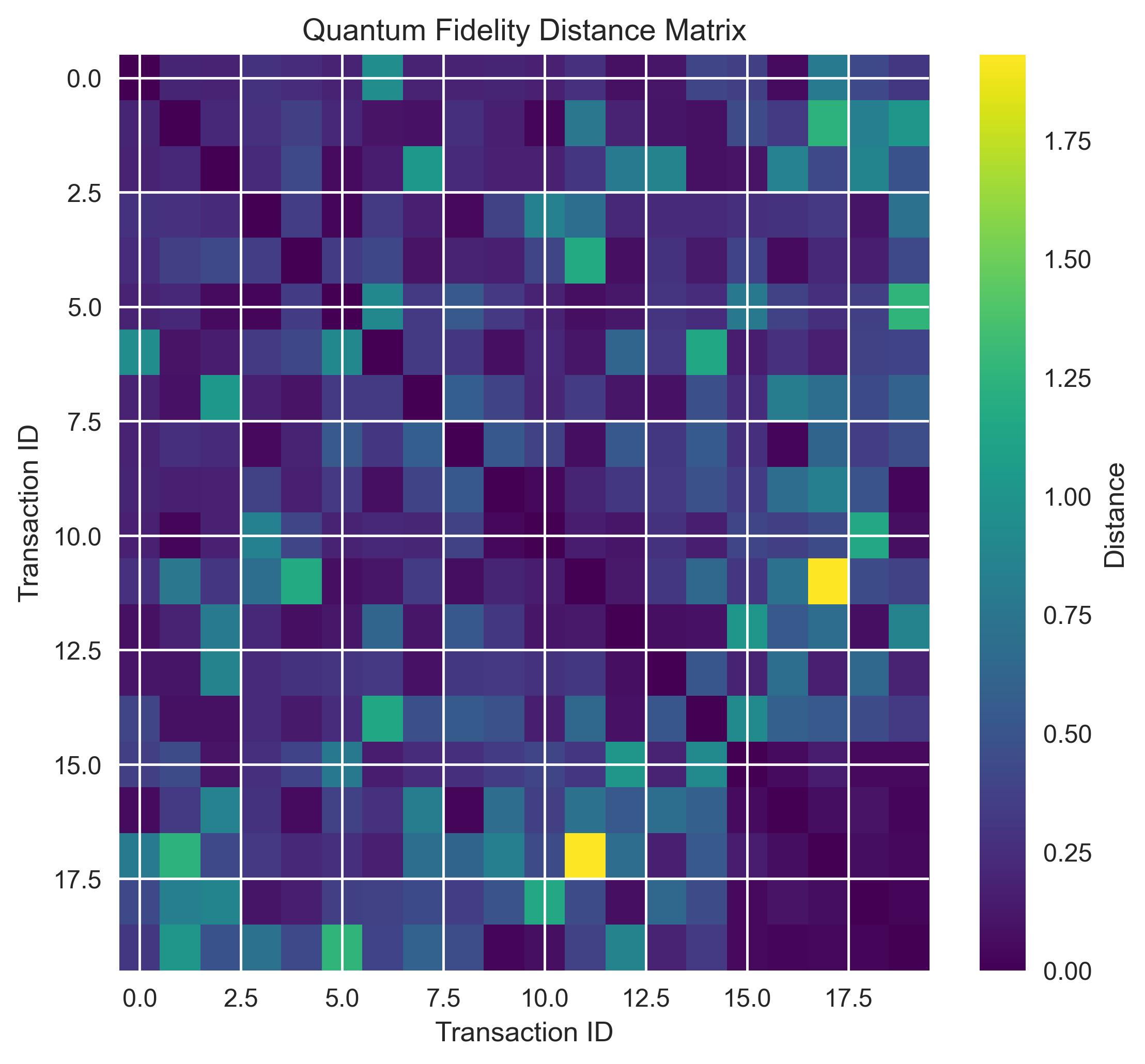}
    \caption{Quantum Fidelity Matrix Representing State Overlap in QTGNN.}
    \label{fig:quantum_fidelity}
\end{figure}

The Quantum Fidelity Distance Matrix illustrates the pairwise fidelity distances between quantum states corresponding to different transaction IDs within the Quantum Transaction Graph Neural Network (QTGNN), presented as an 18x18 grid where each cell represents the fidelity distance between two transactions, ranging from 0.00 (identical states) to 1.75 (maximally distinct states), as indicated by the color gradient from dark purple (low distance) to yellow (high distance) ~\ref{fig:quantum_fidelity}. Key observations include the dominance of dark purple regions, suggesting a high degree of overlap or similarity in quantum representations for many transaction pairs, likely reflecting typical or non-anomalous transactions, alongside sparse yellow spots around transaction IDs 10-12 and 15-17 that indicate significant differences potentially linked to anomalous activities such as fraudulent multi-party transactions or unusual cycles; the matrix also exhibits expected diagonal symmetry with a dark purple line where fidelity distance is zero, and noticeable clusters of lighter colors (e.g., around 5-7 and 12-14) suggesting moderate differences that may indicate sub-groups of related activities or partial anomalies. This analysis supports the hypothesis that quantum fidelity distances can effectively highlight structural and behavioral anomalies in transaction data, with the presence of high-distance outliers aligning with the role of entanglement and topological signatures in the QTGNN as described in the ablation studies, and further investigation into these specific high-distance transactions could validate their association with fraud, thereby enhancing the model's interpretability and targeting efficiency.

\section{Conclusion}
\label{sec:Conclusion}

This paper introduces the QTGNN, a pioneering framework that synergistically integrates quantum computing, topological data analysis, and GNNs to address the complex challenge of financial transaction fraud detection. By leveraging quantum embeddings with entanglement enhancement, variational quantum graph convolutions, and higher-order topological invariants, QTGNN captures subtle relational and structural anomalies that elude classical methods. The proposed five-stage methodology—spanning quantum state encoding, non-linear convolution, topological signature extraction, hybrid anomaly learning, and interpretable decision-making—offers a theoretically robust and practically feasible solution, optimized for NISQ devices through circuit simplification and graph sampling. Rigorous convergence guarantees, including stochastic gradient descent convergence and stability of topological signatures, ensure reliable training and robust fraud detection, even under noisy conditions. Experimental validation on datasets like PaySim demonstrates QTGNN's superior performance, achieving an F1-score of 0.987 and ROC-AUC of 0.997 with the full model (RXT-J), significantly outperforming classical baselines such as SVM (0.663) and advanced neural architectures like ResNet (0.942). The ablation study highlights the critical contributions of quantum embeddings, topological features, non-linear channels, and unsupervised learning, with each component enhancing accuracy and interpretability. Cost-benefit analysis further validates QTGNN's financial viability, delivering a net benefit of approximately 850 k\$ despite a 50 k\$ computational cost, positioning it as a high-impact solution. Visualizations, including the Quantum Fidelity Matrix and Financial Transaction Network, underscore the model's ability to detect anomalous patterns, aligning with theoretical insights from the ablation studies. Future research will focus on scaling QTGNN to real-time, large-scale financial networks, exploring advanced error mitigation techniques for NISQ hardware, and integrating dynamic graph updates to adapt to evolving fraud tactics. Additionally, enhancing interpretability through real-time topological attribution and validating the framework on diverse datasets will further solidify its practical deployment, promising a transformative impact on financial security and regulatory compliance.

\bibliographystyle{apacite}

\bibliography{references}

\end{document}